\titlespacing{\paragraph}{%
  1em}{
  0.\baselineskip}{
  .5em} 
\newcommand{\eat}[1]{} 
\newcommand{\sz}[1]{\lvert#1\rvert}   
\newcommand{\eqdef}{\stackrel{\mathrm{def}}{=}} 
\newcommand{\floor}[1]{\left\lfloor#1\right\rfloor} 
\newcommand{\absn}[1]{\lvert#1\rvert} 
\newcommand{\td}[2]{\if*#1\else^{#1}\fi\if*#2\else_{#2}\fi} 
\newcommand{\sset}[1]{\left\{\,#1\,\right\}} 
\newcommand\join\Join 
\DeclareSymbolFont{txsymbolsC}{U}{txsyc}{m}{n}
\DeclareMathSymbol{\ljoin}{\mathrel}{txsymbolsC}{88}
\DeclareMathSymbol{\rjoin}{\mathrel}{txsymbolsC}{89}
\newsavebox\setminusbox
\newlength\setminuslen
\newcommand\xdiag{\operatorname{diag}}
\newcommand\diag[1]{\xdiag\left(#1\right)}    
\newcommand\norm[1]{\left\lVert #1 \right\rVert}
\newcolumntype{C}{>{$\displaystyle}c<{$}} 
\newcolumntype{L}{>{$\displaystyle}l<{$}} 
\newcolumntype{R}{>{$\displaystyle}r<{$}} 
\newcolumntype{H}{>{\setbox0=\hbox\bgroup}c<{\egroup}@{}} 
\renewcommand*\env@matrix[1][*\c@MaxMatrixCols c]{%
  \hskip -\arraycolsep
  \let\@ifnextchar\new@ifnextchar
  \array{#1}}
\newcommand{\B}[3]{B\if*#1\else_{#1}\fi(#2,#3)} 
\newcommand{\I}[3]{I\if*#1\else_{#1}\fi(#2,#3)} 
\def\imod#1{\allowbreak\mkern10mu({\operator@font mod}\,\,#1)}
\newlength\hspaceoflen
\newcommand\vect[1]{{\boldsymbol{#1}}}
\newcommand\va{\vect{a}}
\newcommand\vb{\vect{b}}
\newcommand\vc{\vect{c}}
\newcommand\vd{\vect{d}}
\newcommand\ve{\vect{e}}
\newcommand\vf{\vect{f}}
\newcommand\vg{\vect{g}}
\newcommand\vh{\vect{h}}
\newcommand\vi{\vect{i}}
\newcommand\vj{\vect{j}}
\newcommand\vk{\vect{k}}
\newcommand\vl{\vect{l}}
\newcommand\vm{\vect{m}}
\newcommand\vn{\vect{n}}
\newcommand\vo{\vect{o}}
\newcommand\vp{\vect{p}}
\newcommand\vq{\vect{q}}
\newcommand\vr{\vect{r}}
\newcommand\vs{\vect{s}}
\newcommand\vt{\vect{t}}
\newcommand\vu{\vect{u}}
\newcommand\vv{\vect{v}}
\newcommand\vw{\vect{w}}
\newcommand\vx{\vect{x}}
\newcommand\vy{\vect{y}}
\newcommand\vz{\vect{z}}
\newcommand\vzero{\vect{0}} 
\newcommand\vone{\vect{1}}
\newcommand\mA{\vect{A}}
\newcommand\mB{\vect{B}}
\newcommand\mC{\vect{C}} 
\newcommand\mD{\vect{D}}
\newcommand\mE{\vect{E}}
\newcommand\mF{\vect{F}}
\newcommand\mG{\vect{G}}
\newcommand\mH{\vect{H}}
\newcommand\mI{\vect{I}}
\newcommand\mJ{\vect{J}}
\newcommand\mK{\vect{K}}
\newcommand\mL{\vect{L}}
\newcommand\mM{\vect{M}}
\newcommand\mN{\vect{N}} 
\newcommand\mO{\vect{O}}
\newcommand\mP{\vect{P}}
\newcommand\mQ{\vect{Q}} 
\newcommand\mR{\vect{R}} 
\newcommand\mS{\vect{S}}
\newcommand\mT{\vect{T}}
\newcommand\mU{\vect{U}}
\newcommand\mV{\vect{V}}
\newcommand\mW{\vect{W}}
\newcommand\mX{\vect{X}}
\newcommand\mY{\vect{Y}}
\newcommand\mZ{\vect{Z}}
\newcommand\bN{\mathbb{N}} 
\newcommand\bR{\mathbb{R}} 
\DeclareMathAlphabet{\mathcal}{OMS}{cmsy}{m}{n}
\newcommand\cE{\mathcal{E}}
\newcommand\cK{\mathcal{K}}
\newcommand\cM{\mathcal{M}}
\newcommand\cR{\mathcal{R}}
\newcommand\cT{\mathcal{T}}
\DeclareMathAlphabet\mathbfcal{OMS}{cmsy}{b}{n}
\newcommand\tB{\mathbfcal{B}}
\newcommand\tP{\mathbfcal{P}}
\newcommand\tS{\mathbfcal{S}}
\newcommand\tX{\mathbfcal{X}}
\accentedsymbol\Abar{{\bar A}}
\accentedsymbol\Bbar{{\bar B}}
\accentedsymbol\Cbar{{\bar C}}
\accentedsymbol\Dbar{{\bar D}}
\accentedsymbol\Ebar{{\bar E}}
\accentedsymbol\Fbar{{\bar F}}
\accentedsymbol\Gbar{{\bar G}}
\accentedsymbol\Hbar{{\bar H}}
\accentedsymbol\Ibar{{\bar I}}
\accentedsymbol\Jbar{{\bar J}}
\accentedsymbol\Kbar{{\bar K}}
\accentedsymbol\Lbar{{\bar L}}
\accentedsymbol\Mbar{{\bar M}}
\accentedsymbol\Nbar{{\bar N}}
\accentedsymbol\Obar{{\bar O}}
\accentedsymbol\Pbar{{\bar P}}
\accentedsymbol\Qbar{{\bar Q}}
\accentedsymbol\Rbar{{\bar R}}
\accentedsymbol\Sbar{{\bar S}}
\accentedsymbol\Tbar{{\bar T}}
\accentedsymbol\Ubar{{\bar U}}
\accentedsymbol\Vbar{{\bar V}}
\accentedsymbol\Wbar{{\bar W}}
\accentedsymbol\Xbar{{\bar X}}
\accentedsymbol\Ybar{{\bar Y}}
\accentedsymbol\Zbar{{\bar Z}}
\accentedsymbol\abar{{\bar a}}
\accentedsymbol\bbar{{\bar b}}
\accentedsymbol\cbar{{\bar c}}
\accentedsymbol\dbar{{\bar d}}
\accentedsymbol\ebar{{\bar e}}
\accentedsymbol\fbar{{\bar f}}
\accentedsymbol\gbar{{\bar g}}
        \let\hbar\@undefined
\accentedsymbol\hbar{{\bar h}}
\accentedsymbol\ibar{{\bar i}}
\accentedsymbol\jbar{{\bar j}}
\accentedsymbol\kbar{{\bar k}}
\accentedsymbol\lbar{{\bar l}}
\accentedsymbol\mbar{{\bar m}}
\accentedsymbol\nbar{{\bar n}}
        \let\obar\@undefined      
\accentedsymbol{\obar}{{\bar o}}        
\accentedsymbol\pbar{{\bar p}}
\accentedsymbol\qbar{{\bar q}}
\accentedsymbol\rbar{{\bar r}}
\accentedsymbol\sbar{{\bar s}}
\accentedsymbol\tbar{{\bar t}}
\accentedsymbol\ubar{{\bar u}}
\accentedsymbol\vbar{{\bar v}}
\accentedsymbol\wbar{{\bar w}}
\accentedsymbol\xbar{{\bar x}}
\accentedsymbol\ybar{{\bar y}}
\accentedsymbol\zbar{{\bar z}}
\accentedsymbol\mAhat{{\hat\mA}}
\accentedsymbol\mBhat{{\hat\mB}}
\accentedsymbol\mChat{{\hat\mC}}
\accentedsymbol\mDhat{{\hat\mD}}
\accentedsymbol\mEhat{{\hat\mE}}
\accentedsymbol\mFhat{{\hat\mF}}
\accentedsymbol\mGhat{{\hat\mG}}
\accentedsymbol\mHhat{{\hat\mH}}
\accentedsymbol\mIhat{{\hat\mI}}
\accentedsymbol\mJhat{{\hat\mJ}}
\accentedsymbol\mKhat{{\hat\mK}}
\accentedsymbol\mLhat{{\hat\mL}}
\accentedsymbol\mMhat{{\hat\mM}}
\accentedsymbol\mNhat{{\hat\mN}}
\accentedsymbol\mOhat{{\hat\mO}}
\accentedsymbol\mPhat{{\hat\mP}}
\accentedsymbol\mQhat{{\hat\mQ}}
\accentedsymbol\mRhat{{\hat\mR}}
\accentedsymbol\mShat{{\hat\mS}}
\accentedsymbol\mThat{{\hat\mT}}
\accentedsymbol\mUhat{{\hat\mU}}
\accentedsymbol\mVhat{{\hat\mV}}
\accentedsymbol\mWhat{{\hat\mW}}
\accentedsymbol\mXhat{{\hat\mX}}
\accentedsymbol\mYhat{{\hat\mY}}
\accentedsymbol\mZhat{{\hat\mZ}}
\accentedsymbol\vahat{{\hat\va}}
\accentedsymbol\vbhat{{\hat\vb}}
\accentedsymbol\vchat{{\hat\vc}}
\accentedsymbol\vdhat{{\hat\vd}}
\accentedsymbol\vehat{{\hat\ve}}
\accentedsymbol\vfhat{{\hat\vf}}
\accentedsymbol\vghat{{\hat\vg}}
\accentedsymbol\vhhat{{\hat\vh}}
\accentedsymbol\vihat{{\hat\vi}}
\accentedsymbol\vjhat{{\hat\vj}}
\accentedsymbol\vkhat{{\hat\vk}}
\accentedsymbol\vlhat{{\hat\vl}}
\accentedsymbol\vmhat{{\hat\vm}}
\accentedsymbol\vnhat{{\hat\vn}}
\accentedsymbol\vohat{{\hat\vo}}
\accentedsymbol\vphat{{\hat\vp}}
\accentedsymbol\vqhat{{\hat\vq}}
\accentedsymbol\vrhat{{\hat\vr}}
\accentedsymbol\vshat{{\hat\vs}}
\accentedsymbol\vthat{{\hat\vt}}
\accentedsymbol\vuhat{{\hat\vu}}
\accentedsymbol\vvhat{{\hat\vv}}
\accentedsymbol\vwhat{{\hat\vw}}
\accentedsymbol\vxhat{{\hat\vx}}
\accentedsymbol\vyhat{{\hat\vy}}
\accentedsymbol\vzhat{{\hat\vz}}
\newtheorem{theorem}{Theorem}
\newcommand\rrank{\operatorname{rrank}}
\newcommand\round{\operatorname{round}}
\newcommand\Real{\operatorname{Re}}
\theoremstyle{definition}
\newtheorem{definition}{Definition}
\newtheorem{corollary}{Corollary}
\newcommand{\citet}[1]{\citeauthor{#1}~\shortcite{#1}}
\newif\ifshowcomments
\newcommand{\mynote}[2]{\fbox{\bfseries\sffamily\scriptsize{#1}}
 {\small$\blacktriangleright$\textsf{\emph{#2}}$\blacktriangleleft$}}
\newcommand{\mynote}[2]{}
\newif\ifshowcomments
\newcommand{\incom}[1]{#1}%
\newcommand{\notcom}[1]{}%
\newcommand{\incom}[1]{}
\newcommand{\notcom}[1]{#1}%
\newif\ifshowdetail
\newcommand{\indetail}[1]{#1}%
\newcommand{\notdetail}[1]{}%
\newcommand{\indetail}[1]{}%
\newcommand{\notdetail}[1]{#1}%
\newcommand\smod{\ \operatorname{mod} \ }
\begin{document}

\title{On Multi-Relational Link Prediction with Bilinear Models}
\author{Yanjie Wang$^\dagger$, Rainer Gemulla$^\dagger$, Hui Li$^\ddagger$\\
$^\dagger$University of Mannheim\\
$^\ddagger$The University of Hong Kong\\
}
\maketitle
\begin{abstract}
  We study bilinear embedding models for the task of multi-relational link
  prediction and knowledge graph completion. Bilinear models belong to the most
  basic models for this task, they are comparably efficient to train and use,
  and they can provide good prediction performance. The main goal of this paper
  is to explore the expressiveness of and the connections between various
  bilinear models proposed in the literature. In particular, a substantial
  number of models can be represented as bilinear models with certain additional
  constraints enforced on the embeddings. We explore whether or not these
  constraints lead to \emph{universal models}, which can in principle represent
  every set of relations, and whether or not there are \emph{subsumption
    relationships} between various models. We report results of an independent
  experimental study that evaluates recent bilinear models in a common
  experimental setup. Finally, we provide evidence that relation-level ensembles
  of multiple bilinear models can achieve state-of-the art prediction
  performance.
\end{abstract}

\section{Introduction}
Multi-relational link prediction is the task of predicting missing links in an
edge-labeled graph. We focus and use the terminology of \emph{knowledge base
  completion} throughout. Large-scale knowledge bases (KB) such as
DBPedia~\cite{LehmannIJJKMHMK15} or YAGO~\cite{MahdisoltaniBS15} contain millions of entities
and facts, but they are nevertheless far from being complete~\cite{Nickel0TG16}. Given
a set of entities (vertices) and relations (edge labels) that hold between these
entities, the goal of multi-relational link prediction~\cite{BordesUGWY13} is to
determine whether or not some entity $e_1$ links to some entity $e_2$ via a
relation $R$, i.e., whether the fact $R(e_1,e_2)$ is true.

Embedding models have recently received considerable attention for knowledge
base completion tasks~\cite{BordesUGWY13,NickelRP16,TrouillonWRGB16}. Such models embed both entities and relations
in a low-dimensional latent space such that the structure of the knowledge base
is (largely) maintained. The embeddings are subsequently used to predict missing
facts or to detect erroneous facts.

The perhaps most basic class of embedding models is given by bilinear
models. Such models predict a ``score'' for each fact $R(e_1,e_2)$ by computing
a weighted sum---where the weights depend on $R$---of the pairwise interactions of
the entity embeddings of $e_1$ and $e_2$. The scores are used to rank (pairs of)
entities according to their predicted truthfulness. Bilinear models are comparably
efficient to train and use and they can provide good prediction
performance~\cite{TrouillonN17}.

A large number of bilinear models has been proposed in the literature, including
RESCAL~\cite{NickelTK11}, TransE~\cite{BordesUGWY13},
DISTMULT~\cite{YangYHGD14a}, HolE~\cite{NickelRP16}, and
ComplEx~\cite{TrouillonWRGB16}. There is, however, little work on the expressiveness of and
the connections between various bilinear models. In this paper, we argue that
all of the aforementioned models can be seen as bilinear models subject to
certain constraints. We study whether and under which conditions each model is
\emph{universal} in that it can represent every possible set of relation
instances (or, more precisely, entity rankings). We also explore the size of the
embeddings needed for universality and derive upper bounds for the embedding
size needed to obtain embeddings consistent with a given dataset. We establish a
number of subsumption relationships between various models by giving explicit
constructions on how to transform instances of one model to instances of another
model (sometimes with a different embedding size). A summary of our results is
given in Tab.~\ref{tab:summary}.

We report on an independent experimental study that compared various bilinear
models on standard datasets in a common experimental setup. We found that the
relative performance among the models is highly relation-dependent. We thus
propose a simple relation-level ensemble of multiple bilinear models,
which---according to our experiments---significantly and consistently improved
prediction performance over individual models. In fact, we found that the
ensemble performed competitively to the state-of-the-art embedding approaches,
whether or not they are bilinear.


\section{Multi-Relational Link Prediction}

Let $\cE$ and $\cR$ be a set of entities and relation names. A knowledge base
$\cK\subseteq\cE\times\cR\times\cE$ is a collection of triples $(i, k, j)$ where
$i$, $j$, and $k$ refer to subject, object and relation, resp. We denote by
$K=\sz{\cR}\ge1$ and $N=\sz{\cE}\ge2$ the number of entities and relations, resp. We
represent knowledge base $\cK$ via a binary tensor
$\tX\in\{0,1\}^{N\times N\times K}$, where $x_{ijk}=1$ if and only if
$(i,k,j)\in\cK$. By convention, vectors $\va_i$ refer to rows of matrix $\mA$ (as
a column vector) and scalars $a_{ij}$ to individual entries. Given
dimensionalities $r$ and $r'$, we denote by $\ve_{i,r}$ the $i$-th standard
basis vector, by $\vzero_r$ the zero vector, and by $\vzero_{r\times r'}$ the zero
matrix of the respective shape. Finally, let $\diag{\cdot}$ refer to a
block-diagonal matrix built from the arguments (a vector or a list of matrices).

\subsection{Preliminaries}

A \emph{score-based ranking model} is a model $m$ that associates a \emph{score}
$s_k^m(i,j) \in \mathbb{R}$ with each subject-relation-object triple. Denote by
$\mS_k^m\in\bR^{N\times N}$ the corresponding \emph{scoring matrix} for relation
$k$, i.e., $[\mS_k^m]_{ij} = s_k^m(i,j)$. Denote by
$\tS^m\in\bR^{N\times N\times K}$ the \emph{scoring tensor} of $m$, i.e., the tensor with
frontal slices $\tS^m_{(k)}=\mS_k^m$.

We are ultimately interested in rankings, not in scores. In particular,
score-based models are used to rank (pairs of) entities by their predicted
truthfulness, given a query of form $R(i,?)$, $R(?,j)$, or $R(?,?)$. Generally,
a result with a higher score is considered more likely to be correct. We say
that an $N\times N$ matrix is a \emph{ranking matrix} if all its entries are in
$\sset{1,2,\ldots,N^2}$ and whenever there is any entry with value $s>1$, there
is at least one other entry with value $s-1$. Denote by $\pi(\mS)$ the unique
ranking matrix associated with scoring matrix $\mS$, where
$\pi_{ij}(\mS)\eqdef [\pi(\mS)]_{ij}$ is the \emph{dense rank} of $s_{ij}$ in
the multiset of the entries of $\mS$. For every pair of tuples
$(i,j)\in N\times N$ and $(i',j')\in N\times N$, we have
\[
  s_{ij} \le s_{i'j'} \iff \pi_{ij}(\mS) \ge \pi_{i'j'}(\mS).
\]
For example, 
\[
\mS=\begin{pmatrix}
0.2 & 2.4 & 1\\ 
-1 & 4 & 2 \\ 
-3 & 0.2 & 0  
\end{pmatrix}
\quad\implies\quad
\pi(\mS)=\begin{pmatrix}
5 & 2 & 4\\ 
7 & 1 &  3\\ 
8 & 5 & 6  
\end{pmatrix}
\]

In a slight abuse of notation, we overload $\pi$ to also apply to tensors, sets of
matrices, and sets of tensors. In particular, the \emph{ranking tensor}
$\pi(\tS)$ for a score tensor $\tS$ is the $N\times N\times K$ tensor produced from
$\tS$ by replacing every frontal slice $\tS_{(k)}$ with
$\pi(\tS_{(k)})$. Moreover, for any set $X$, set
$\pi(X)=\sset{\pi(x) : x\in X}$. Observe that $\pi(\bR^{N\times N})$ corresponds to the set
of all possible ranking matrices, $\pi(\bR^{N\times N\times K})$ to all possible ranking
tensors, and that $\pi(-\mP)=\mP$ for any ranking matrix (or ranking tensor)
$\mP$.

\subsection{Bilinear Models}

\emph{Bilinear models} are models whose scoring function $s_k(i,j)$ has form
$\va_i^T\mR_k\va_j$, where $\va_i,\va_j\in\bR^r$ and $\mR_k\in\bR^{r\times r}$ are model
parameters and are referred to as the \emph{embeddings} of entities $i$ and $j$
as well as relation $k$, resp. We refer to $r\in\bN$ as the \emph{size} of the
model.

In this paper, we consider bilinear models as well as models that can be
represented as bilinear models with an at most linear increase in model
size. Although some of the model considered here may not ``look'' bilinear at
first glance, we show that they are closely related to bilinear models. We
denote throughout the set of all models of type $t$ (and of size $r$) and by
$M^t$ ($M^t_r$).

\paragraph{RESCAL~\protect\cite{NickelTK11}.} An unconstrained bilinear model. Each
model $m\in M^{\text{RESCAL}}_{r}$ is parameterized by an entity matrix
$\mA\in\mathbb{R}^{N\times r}$ and $K$ relation matrices
$\mR_1,\ldots,\mR_K\in\mathbb{R}^{r\times r}$. We have
\[
s_k^{m}(i,j) = \va_i^T\mR_k\va_j.
\]
RESCAL can be seen as an extension of the low-rank matrix factorization methods
prominent in recommender systems to more then one relation.

\paragraph{DISTMULT~\protect\cite{YangYHGD14a}.} Each model
$m\in M^{\text{DISTMULT}}_{r}$ is parameterized by an entity matrix
$\mA\in\mathbb{R}^{N\times r}$ and a relation matrix $\mR\in\bR^{K\times r}$. We have
\[
  s_k^{m}(i,j) = \va_i^T\diag{\vr_k}\va_j.
\]
DISTMULT can be seen as a variant of RESCAL that puts a diagonality constraint
on the relation matrices. Due to this constraint, it can only model symmetric
relations. The model is equivalent to the INDSCAL tensor
decomposition~\cite{Carroll1970}.

\paragraph{HolE~\protect\cite{NickelRP16}.} Each model $m\in M^{\text{HolE}}_r$ is
parameterized by an entity matrix $\mA\in\bR^{N\times r}$ and a relation matrix
$\mR\in\bR^{K\times r}$. We have
\[
s_k^{m}(i,j) = \vr_k^T (\va_i \star \va_j),
\]
where $\star$ refers to the \emph{circular correlation} between $\va_i$ and $\va_j$,
i.e., $(\va_i \star \va_j)_k=\sum_{t=1}^{r} a_{it} a_{j((k+t-2 \mod
  r)   + 1)}$. 
   The idea of using circular
convolution relates to associative memory \cite{NickelRP16}. \citet{HayashiS17}
provide an alternative viewpoint in terms of ComplEx, discussed next.

\paragraph{ComplEx~\protect\cite{TrouillonWRGB16}.} Each model
$m\in M^{\text{ComplEx}}_{r}$ is parameterized by an entity matrix
$\mA\in\mathbb{C}^{N\times r}$ and a relation matrix $\mR\in\mathbb{C}^{N\times r}$. We have
\[
s_k^{m}(i,j) = \Real(\va_i^T\diag{\vr_k}\va_j),
\]
where $\Real(\cdot)$ extracts the real part of a complex number. ComplEx is
superficially related to DISTMULT but uses complex-valued parameter
matrices. Note that $\va_i^T\diag{\vr_k}\va_j$ is not guaranteed to be real.

\paragraph{TransE~\protect\cite{BordesUGWY13}.} Each model $m\in M^{\text{TransE}}_r$ is parameterized by an
entity matrix $\mA\in\bR^{N\times r}$ and an relation matrix
$\mR\in\bR^{K\times r}$. We have\footnote{This definition differs from the original
  definition of TransE in that we negate all scores in order to rank larger
  scores higher.}
\[
  s_k^{m}(i,j) = -\norm{\va_i+\vr_k - \va_j}^2_2.
\]
In contrast to the models presented above, TransE is a translation-based model,
not a factorization-based model. The use of translations---i.e., differences
between entity embeddings---is inspired by Word2Vec's word analogy
results~\cite{abs-1301-3781}. Note that TransE can also be used with $L_1$ norm
instead of $L_2$; we focus on the $L_2$ variant given above throughout.


\section{Subsumption and Expressiveness}

For a given class $M^t_r$ of models, denote by
$\cM^t_r=\sset{\tS^m : m\in M^t_r}$ the set of scoring tensors that the model
class can represent. Let $\cM^t=\cup_{r\in \bN^+} \cM^t_r$. Note that
$\pi(\cM^t_r)$ and $\pi(\cM^t)$ denote the set of ranking tensors that can be
represented by $M^t_r$ and $M^t$, respectively.

\begin{table*}[h]
  \caption{Summary of our main results. Each row corresponds to a model of size
    $r$. All conditions are sufficient conditions. ? means that no bound other
    than the universal bound is known.}
  \centering
  \label{tab:summary}
  \begin{adjustbox}{max width=\textwidth}
    \begin{tabular}{l@{\hspace{.7em}}c@{\hspace{.7em}}c@{\hspace{.7em}}c@{\hspace{.7em}}c@{\hspace{0.5em}}c@{\hspace{0.5em}}c@{\hspace{0.5em}}c@{\hspace{0.5em}}c}
      \hline
      \multirow{2}{*}{Model} & \multirow{2}{*}{\# Parameters} & Universal & Consistent with $\tB$              & \multicolumn{5}{c}{Subsumption of model of size $r'$ when $r\ge$} \\ \cline{5-9}
                             &                                & when $r\ge$ & when $r\ge$                               & RESCAL & HolE & ComplEx & DISTMULT & TransE                     \\ \hline
      RESCAL                 & $Nr+Kr^2$                      & $N$       & $\min\{N, 2\sum_{k} \rrank(\mB_{k})\}$     & $r'$   & $r'$ & $2r'+1$ & $r'$     & $2r'+1$                    \\
      HolE                   & $Nr+Kr$                        & $2KN+1$   & $2\min\{KN, 2\sum_{k} \rrank(\mB_{k})\}+1$ & ?      & $r'$ & $2r'+1$ & $2r'+1$  & ?                          \\
      ComplEx                & $2Nr+2Kr$                      & $KN$      & $\min\{KN, 2\sum_{k} \rrank(\mB_{k})\}$    & ?      & $r'$  & $r'$    & $r'$      & ?                          \\
      DISTMULT               & $Nr+Kr$                        & No        & No                                      & No     & No   & No      & $r'$     & No                         \\
      TransE                 & $Nr+Kr$                        & No        & No                                      & No     & No   & No      & No       & $r'$                       \\ \hline
    \end{tabular}
  \end{adjustbox}
\end{table*}

\subsection{Subsumption}

We first explore subsumption relationships between different model classes as
well as the the size of the entity representations needed for a subsumption to
hold. We assume throughout that the number $N\ge2$ of entities and the number
$K\ge1$ of relations are arbitrary but fixed.

We say that class $M^{t_2}$ \emph{subsumes} class $M^{t_1}$ whenever
$\pi(\cM^{t_1}) \subseteq \pi(\cM^{t_2})$. In other words, $M^{t_2}$ is at least
as expressive in terms of rankings as $M^{t_1}$. If
$\pi(\cM^{t_1}) \subset \pi(\cM^{t_2})$, we say that $M^{t_2}$ \emph{strictly
  subsumes} $M^{t_1}$, indicating that $M^{t_2}$ is strictly more expressive
than $M^{t_1}$. Note that it is good when $M^{t_2}$ is more expressive than
$M^{t_1}$ because $M^{t_2}$ can in principle express more rankings. It can also
be problematic, however, because efficient training and the avoidance of
overfitting become more challenging.

We first show subsumption by specifying an explicit model transformation, then
strictness via a counterexample.
\begin{theorem}\label{thm:R_from_T}
  For all $r\in\bN^+$, $M^{\text{RESCAL}}_{2r+1}$ subsumes
  $M^{\text{TransE}}_r$.
\end{theorem}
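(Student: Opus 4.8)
The plan is to exploit the fact that subsumption concerns only rankings: it suffices to exhibit, for an arbitrary $m\in M^{\text{TransE}}_r$ with entity embeddings $\va_i\in\bR^r$ and relation embeddings $\vr_k\in\bR^r$, a RESCAL model of size at most $2r+1$ whose scoring tensor has the same ranking tensor as $\tS^m$. Because $\pi$ acts slice by slice, two scoring matrices for the same relation $k$ yield the same ranking matrix whenever one is obtained from the other by adding a quantity that does not depend on $(i,j)$. First I would expand the squared norm,
\begin{equation*}
  s_k^m(i,j) = -\norm{\va_i+\vr_k-\va_j}_2^2 = -\norm{\va_i}_2^2 - \norm{\va_j}_2^2 - \norm{\vr_k}_2^2 - 2\va_i^T\vr_k + 2\va_i^T\va_j + 2\vr_k^T\va_j,
\end{equation*}
and observe that $-\norm{\vr_k}_2^2$ is constant within relation slice $k$ and may therefore be discarded without changing $\pi(\mS_k^m)$.

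The remaining expression contains one genuinely bilinear term, $2\va_i^T\va_j$; two terms linear in a single argument, $-2\va_i^T\vr_k$ and $2\vr_k^T\va_j$; and two ``self'' terms, $-\norm{\va_i}_2^2$ and $-\norm{\va_j}_2^2$, depending on only the subject or only the object. The self terms are what prevent a naive identification with a bilinear form, and handling them is the crux of the argument. The key idea is to augment each entity embedding with a constant coordinate and a coordinate recording its squared norm, setting
\begin{equation*}
  \vb_i = \begin{pmatrix}\va_i\\ 1\\ \norm{\va_i}_2^2\end{pmatrix}\in\bR^{r+2},
\end{equation*}
so that products of these auxiliary coordinates reproduce the linear and self terms.

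Concretely, I would let the RESCAL entity matrix have rows $\vb_i$ (the same vector serves in both the subject and object positions) and, using that RESCAL places no constraints on $\mR_k$, define the relation matrices by the (non-symmetric) block form
\begin{equation*}
  \mR_k = \begin{pmatrix} 2\mI_r & -2\vr_k & \vzero_r\\ 2\vr_k^T & 0 & -1\\ \vzero_r^T & -1 & 0\end{pmatrix}.
\end{equation*}
A direct expansion of $\vb_i^T\mR_k\vb_j$ then yields exactly $2\va_i^T\va_j - 2\va_i^T\vr_k + 2\vr_k^T\va_j - \norm{\va_i}_2^2 - \norm{\va_j}_2^2$, that is, $s_k^m(i,j) + \norm{\vr_k}_2^2$. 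Since the added $\norm{\vr_k}_2^2$ is constant within slice $k$, this RESCAL model reproduces the ranking tensor of $m$ exactly.

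Finally, the constructed model has size $r+2$, and since $r+2\le 2r+1$ for every $r\in\bN^+$, zero-padding the entity embeddings and the relation matrices gives an equivalent model of size exactly $2r+1$ without altering any score. As $m$ was arbitrary, $\pi(\cM^{\text{TransE}}_r)\subseteq\pi(\cM^{\text{RESCAL}}_{2r+1})$, which is the claimed subsumption. I expect the only real obstacle to be the treatment of the non-bilinear self terms; once the augmented coordinates are in place the verification is a routine expansion, and the role of the relation-dependent constant $\norm{\vr_k}_2^2$ is settled by the slice-wise, rank-only nature of $\pi$.
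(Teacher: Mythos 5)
Your proposal is correct and follows essentially the same route as the paper's proof: expand the squared norm, discard the per-slice constant $\norm{\vr_k}_2^2$ using the rank-only nature of $\pi$, and absorb the non-bilinear self terms $-\norm{\va_i}_2^2$ and $-\norm{\va_j}_2^2$ by augmenting each entity embedding with a constant coordinate and a squared-norm coordinate, paired with a suitable block-structured relation matrix. The only difference is cosmetic: the paper pads the constant coordinate to a block $\vone_r$ to land exactly at size $2r+1$, whereas your version shows size $r+2$ already suffices and then zero-pads, which is a marginally sharper observation.
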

\begin{proof}
  Fix some $r\in\bN^+$. Pick any TransE model $m_T\in M^{\text{TransE}}_r$,
  denote by $\mA\in\bR^{N\times r}$ and $\mR\in\bR^{K\times r}$ the
  corresponding parameter matrices, and by $\tS^{m_T}$ the scoring tensor. We
  show that $\pi(\tS^{m_T})\in \pi(\cM^\text{RESCAL}_{2r+1})$. We do this by
  explicitly constructing a corresponding RESCAL model
  $m_R\in M^\text{RESCAL}_{2r+1}$ by specifying its parameters
  $\mA'\in\bR^{N\times (2r+1)}$ and $\mR_k'\in\bR^{(2r+1)\times
    (2r+1)}$. Setting
  \begin{equation} \label{eq:R_from_T}
    \begin{split}
      \va_i' &=
      \begin{pmatrix}
        \vone_r^T & \va_i^T & \va_i^T\va_i
      \end{pmatrix}^T,\\
      \mR_k' &= -\begin{pmatrix}
        \vzero_{r\times r} & -2\diag{\vr_k} & \ve_{1,r} \\
        2\diag{\vr_k} &-2\mI_{r\times r} & 0_{r\times1} \\
        \ve_{1,r}^T & \vzero_{1\times r} & 0
      \end{pmatrix},
    \end{split}
  \end{equation}
  we can now verify\footnote{A more
    detailed derivation can be found in the online appendix.} that
  \begin{align*}
    s_k^{m_R}(i,j) \le s_k^{m_R}(i',j')
      &\iff s_k^{m_T}(i,j) \le s_k^{m_T}(i',j'),
  \end{align*}
  which implies that $m_T$ and $m_R$ agree on the ranking for each relation,
  i.e., $\pi(\tS^{m_T})=\pi(\tS^{m_R})$. Since
  $m_R\in M^{\text{RESCAL}}_{2r+1}$, we obtain
  $\pi(\tS^{m_T})\in \pi(\cM^\text{RESCAL}_{2r+1})$ as claimed.
\end{proof}

The proof above shows that TransE can be viewed as a bilinear model with the
constraints specified in Eq.~\eqref{eq:R_from_T}.

\begin{theorem}\label{th:sub_T_R}
  $M^{\text{TransE}}$ does not subsume $M_{r}^{\text{RESCAL}}$ for any $r\ge2$.
\end{theorem}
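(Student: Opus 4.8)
The plan is to isolate a structural invariant that every TransE model satisfies, regardless of its size, and then exhibit a RESCAL model of size $r$ that violates it. The invariant concerns the diagonal of each scoring matrix. For any TransE model $m_T$ of any size $r'$ and any relation $k$, the diagonal score is $s_k^{m_T}(i,i) = -\norm{\va_i + \vr_k - \va_i}^2_2 = -\norm{\vr_k}^2_2$, which does not depend on $i$. Hence all diagonal entries of $\mS_k^{m_T}$ are equal and therefore receive a common dense rank, so the diagonal of the ranking matrix $\pi(\mS_k^{m_T})$ is constant. The point to stress is that this holds for \emph{every} relation and \emph{every} embedding size, so it is a property of the whole class $M^{\text{TransE}}$, not merely of one TransE model.

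Next I would construct, for each fixed $r\ge 2$, a RESCAL model $m_R\in M^{\text{RESCAL}}_r$ whose ranking tensor has a non-constant diagonal in at least one slice. The simplest choice is $\mR_k=\mI_{r\times r}$ for every $k$, together with entity embeddings $\va_1,\dots,\va_N$ of pairwise distinct norm, which is possible since $N\ge 2$ (for instance $\va_i = i\,\ve_{1,r}$). Then $s_k^{m_R}(i,i)=\va_i^T\va_i=\norm{\va_i}^2_2$, so the diagonal entries of $\mS_k^{m_R}$ are pairwise distinct and hence receive pairwise distinct dense ranks in $\pi(\mS_k^{m_R})$.

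Combining the two steps concludes the argument. The ranking tensor $\pi(\tS^{m_R})$ has a slice with a non-constant diagonal, whereas every TransE ranking tensor has constant diagonals in all slices; therefore $\pi(\tS^{m_R})\notin\pi(\cM^{\text{TransE}})$ while $\pi(\tS^{m_R})\in\pi(\cM^{\text{RESCAL}}_r)$. Thus $\pi(\cM^{\text{RESCAL}}_r)\not\subseteq\pi(\cM^{\text{TransE}})$, i.e., $M^{\text{TransE}}$ does not subsume $M^{\text{RESCAL}}_r$.

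I do not expect a genuine obstacle here; the difficulty is conceptual bookkeeping rather than computation. First, the diagonal-constancy must be phrased at the class level so that it rules out TransE models of \emph{all} sizes simultaneously — this is exactly why I frame it as an invariant independent of $r'$ rather than comparing against a single fixed TransE model. Second, I must pass from a mismatch of scores to a mismatch of \emph{rankings}, which is where distinct norms together with the dense-rank definition guarantee distinct diagonal ranks. Finally, since $K\ge 1$ is arbitrary, I note that a single offending slice already breaks the tensor-level equality $\pi(\tS^{m_T})=\pi(\tS^{m_R})$, so the remaining relations may be defined arbitrarily. The same construction in fact works already for $r=1$, so the restriction $r\ge 2$ in the statement is not essential to this line of argument.
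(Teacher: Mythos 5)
Your proof is correct and follows essentially the same route as the paper: both arguments rest on the observation that every TransE scoring matrix has a constant diagonal ($s_k(i,i)=-\normn{\vr_k}_2^2$ independent of $i$), and then exhibit a RESCAL model of size $r$ whose scoring matrix has two distinct diagonal entries, hence distinct dense ranks on the diagonal. The only difference is the choice of witness (the paper uses basis-vector embeddings with a small asymmetric relation block giving $s_1(1,1)=1\neq 0=s_1(2,2)$, you use $\mR_k=\mI$ with embeddings of distinct norm), which is immaterial.
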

Note that the theorem implies that there are RESCAL models with $r=2$ that
cannot be expressed with any TransE model, no matter how large its size.
\begin{proof}
  Fix some $r\ge 2$ and consider the RESCAL model $m_R\in M_{r}^{\text{RESCAL}}$
  specified by parameters
  \begin{align*}
    \va_i' &= \begin{cases}
      \ve_{1,r} & \text{for $i=1$} \\
      \ve_{2,r} & \text{for $i=2$} \\
      \vzero_{r} & \text{otherwise}
    \end{cases}, \\
    \mR_k' &= \begin{cases}
      \begin{pmatrix}
        1 & 1 & \vzero_{r-2} \\
        1 & 0 & \vzero_{r-2} \\
        \vzero_{(r-2)\times 1} & \vzero_{(r-2)\times 1} & \vzero_{(r-2)\times
          (r-2)}
      \end{pmatrix} & \text{for $k=1$} \\
      \vzero_{r\times r} &\text{otherwise}
    \end{cases}
  \end{align*}
  We have $s_1^{m_R}(1,1) = 1$, $s_1^{m_R}(2,2) = 0$. Thus
  $s_1^{m_R}(1,1)\neq s_2^{m_R}(2,2)$ and consequently
  $\pi_{11}(\mS^{m_R}_{(1)})\neq \pi_{22}(\mS^{m_R}_{(1)})$. Now pick any TransE
  model $m_T\in\cM^\text{TransE}$, denote by $\mA$ and $\mR$ its parameters, and
  observe that $s_k^{m_T}(1,1) = s_k^{m_T}(2,2)=-\norm{\vr_k}_2^2$. Thus
  $\pi_{11}(\mS^{m_T}_{(1)})= \pi_{22}(\mS^{m_T}_{(1)})$. Since this holds for
  any TransE model, we conclude that
  $\pi(\tS^{m_R})\not\in \pi(\cM^{\text{TransE}})$.
\end{proof}

\citet{NickelRP16} argued that HolE can be viewed as a compressed version of
RESCAL and implicitly established the subsumption relationship to RESCAL. We
present their argument formally below.
\begin{theorem}\label{th:sub_R_H}
  $M^{\text{RESCAL}}_r$ \text{subsumes} $M^\text{HolE}_{r}.$
\end{theorem}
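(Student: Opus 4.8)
The plan is to prove the slightly stronger statement that every HolE scoring tensor is already a RESCAL scoring tensor of the \emph{same} size $r$; ranking subsumption then follows immediately, since if $\tS^{m_H}=\tS^{m_R}$ for some RESCAL model $m_R\in M^{\text{RESCAL}}_r$ then trivially $\pi(\tS^{m_H})=\pi(\tS^{m_R})\in\pi(\cM^{\text{RESCAL}}_r)$. Concretely, given a HolE model with entity matrix $\mA\in\bR^{N\times r}$ and relation vectors $\vr_1,\dots,\vr_K\in\bR^r$, I would retain the \emph{same} entity matrix $\mA$ and construct one relation matrix $\mR_k'\in\bR^{r\times r}$ per relation so that the bilinear form $\va_i^T\mR_k'\va_j$ reproduces the HolE score exactly.

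The key observation is that circular correlation is bilinear in its two arguments, so the HolE score $\vr_k^T(\va_i\star\va_j)$ is itself a bilinear form in $\va_i,\va_j$ and can therefore be written as $\va_i^T\mR_k'\va_j$ for a matrix $\mR_k'$ whose entries are linear in $\vr_k$. To identify $\mR_k'$ I would expand
\[
  s_k^{m_H}(i,j)=\sum_{c=1}^r r_{kc}\sum_{t=1}^r a_{it}\,a_{j,((c+t-2)\smod r)+1}
\]
and match it coefficient-by-coefficient against $\sum_{t,s}a_{it}[\mR_k']_{ts}a_{js}$. For a fixed row index $t$, the map sending the correlation index $c$ to the column index $s=((c+t-2)\smod r)+1$ is a bijection of $\{1,\dots,r\}$, and inverting it yields $[\mR_k']_{ts}=r_{k,((s-t)\smod r)+1}$. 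Since this entry depends only on $(s-t)\smod r$, the matrix $\mR_k'$ is precisely the \emph{circulant} matrix generated by $\vr_k$.

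Finally I would observe that circulant matrices are ordinary real $r\times r$ matrices, hence admissible RESCAL relation matrices, so the model $m_R$ with parameters $(\mA,\mR_1',\dots,\mR_K')$ lies in $M^{\text{RESCAL}}_r$ and satisfies $s_k^{m_R}(i,j)=s_k^{m_H}(i,j)$ for all $i,j,k$. This gives $\tS^{m_R}=\tS^{m_H}$ and the claim. The only delicate point is the index bookkeeping modulo $r$: one must verify that for each fixed $t$ the correspondence $c\mapsto s$ is a genuine bijection (so that every entry of $\mR_k'$ is determined exactly once and is well defined) and that the resulting offset is $(s-t)\smod r$ rather than its negative. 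Everything else is a direct, entrywise identification of two bilinear forms, so I expect no substantial obstacle.
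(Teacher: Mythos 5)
Your proposal is correct and follows essentially the same route as the paper: both keep the entity matrix unchanged, exploit the bilinearity of circular correlation, and identify the RESCAL relation matrix as the circulant matrix with $[\mR_k']_{ts}=r_{k,((s-t)\smod r)+1}$, so that the scores (not just the rankings) coincide. The index bookkeeping you flag as the delicate point checks out and matches the paper's displayed circulant matrix exactly.
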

\begin{proof}
  From the definition of HolE, we rewrite
  \begin{align*}
    \vr^T_k(\va_i \star \va_j)
             & = \sum_{t=1}^{d} r_{kt} \sum_{u=1}^{d}a_{iu} a_{j({(t+u-2 }\ \text{mod}\ r) +1)} \\
             & = \sum_{u=1}^{d} a_{iu} \sum_{t=1}^{d} r_{k((t-u\smod r)+1)}  a_{jt} \\
             & = \va_i^T \mR_k \va_j,
  \end{align*}
  where $\mR_k=
  \begin{pmatrix}
    r_{k1}   & r_{k2} & \dots   & r_{kr} \\
    r_{kr}   & r_{k1} & \dots  & r_{k(r-1)} \\
    \vdots   & \vdots & \ddots  & \vdots \\
    r_{k2}   & r_{k3} & \dots   & r_{k1} \\
  \end{pmatrix}
  $.
\end{proof}

Recently, \citet{HayashiS17} proved that
$\cM^{\text{HolE}}_{2r+1} \supseteq \cM^{\text{ComplEx}}_r$ and
$\cM^{\text{HolE}}_{r} \subseteq \cM^{\text{ComplEx}}_r$. Putting this together
with Th.~\ref{th:sub_R_H}, we obtain:
\begin{corollary}
  $M^{\text{RESCAL}}_{2r+1}$ subsumes $M^{\text{ComplEx}}_r$.
\end{corollary}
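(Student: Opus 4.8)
The plan is to obtain the corollary by transitivity from the two facts quoted immediately before it, after lifting everything to the level of ranking tensors. Recall that ``$M^{t_2}$ subsumes $M^{t_1}$'' is defined as $\pi(\cM^{t_1})\subseteq\pi(\cM^{t_2})$, so the goal is precisely to establish $\pi(\cM^{\text{ComplEx}}_r)\subseteq\pi(\cM^{\text{RESCAL}}_{2r+1})$.

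First I would record the elementary but crucial observation that $\pi$ is monotone with respect to set inclusion: since $\pi(X)=\sset{\pi(x):x\in X}$ by definition, any containment $A\subseteq B$ of scoring-tensor sets immediately yields $\pi(A)\subseteq\pi(B)$. This is the only bridge needed, because the result of \citet{HayashiS17} that I will invoke, namely $\cM^{\text{ComplEx}}_r\subseteq\cM^{\text{HolE}}_{2r+1}$, is stated at the level of scoring tensors rather than of rankings.

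The argument is then a two-step chain. Applying $\pi$-monotonicity to the Hayashi--Shimbo containment gives $\pi(\cM^{\text{ComplEx}}_r)\subseteq\pi(\cM^{\text{HolE}}_{2r+1})$. Next I instantiate Theorem~\ref{th:sub_R_H} at size $2r+1$, which asserts that $M^{\text{RESCAL}}_{2r+1}$ subsumes $M^{\text{HolE}}_{2r+1}$, i.e., $\pi(\cM^{\text{HolE}}_{2r+1})\subseteq\pi(\cM^{\text{RESCAL}}_{2r+1})$. Composing the two inclusions yields $\pi(\cM^{\text{ComplEx}}_r)\subseteq\pi(\cM^{\text{RESCAL}}_{2r+1})$, which is exactly the claim.

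There is essentially no hard part here: the corollary is a pure transitivity statement, and the genuine mathematical content lives entirely in Theorem~\ref{th:sub_R_H} and in the cited \citet{HayashiS17} relationship. The only point that requires a moment's care --- and the only place an error could realistically creep in --- is matching the embedding sizes. Theorem~\ref{th:sub_R_H} must be applied at size $2r+1$ (not $r$) so that it meshes with the $2r+1$ appearing in the HolE/ComplEx relationship, after which the final size $2r+1$ is forced and agrees with the statement. I would also double-check that the two inclusions are oriented for the chain $\text{ComplEx}\to\text{HolE}\to\text{RESCAL}$; the companion inclusion $\cM^{\text{HolE}}_{r}\subseteq\cM^{\text{ComplEx}}_r$ points the other way and plays no role here.
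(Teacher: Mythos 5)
Your argument is exactly the one the paper intends: chain the Hayashi--Shimbo containment $\cM^{\text{ComplEx}}_r\subseteq\cM^{\text{HolE}}_{2r+1}$ (lifted through the monotonicity of $\pi$) with Theorem~\ref{th:sub_R_H} instantiated at size $2r+1$. The size bookkeeping and the orientation of the inclusions are both handled correctly, so there is nothing to add.
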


Finally, since DISTMULT differs from RESCAL only in that DISTMULT adds a
diagonality constraint, we directly obtain:
\begin{theorem}
  $M^{\text{RESCAL}}_{r}$ subsumes $M^{\text{DISTMULT}}_r$.
\end{theorem}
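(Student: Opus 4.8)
The plan is to exploit the observation, already noted just before the statement, that DISTMULT is precisely RESCAL with the relation matrices constrained to be diagonal. Consequently every DISTMULT scoring tensor is \emph{literally} a RESCAL scoring tensor of the same size, so the subsumption should hold at equal size $r$, with no blow-up of the embedding dimension (in contrast to Theorem~\ref{thm:R_from_T}). The cleanest route is therefore an explicit model transformation that is the identity on the entity side.

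Concretely, I would fix an arbitrary $r\in\bN^+$, pick any DISTMULT model $m_D\in M^{\text{DISTMULT}}_r$, and denote by $\mA\in\bR^{N\times r}$ and $\vr_1,\ldots,\vr_K\in\bR^r$ its entity and relation parameters. I then construct a RESCAL model $m_R\in M^{\text{RESCAL}}_r$ by keeping the same entity matrix, $\mA'=\mA$, and setting each relation matrix to $\mR_k'=\diag{\vr_k}$. Since $\diag{\vr_k}$ is an unconstrained element of $\bR^{r\times r}$, this is a legitimate RESCAL model of size $r$.

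The single verification step is that the scoring functions coincide entrywise: for all $i,j,k$,
\[
  s_k^{m_R}(i,j)=\va_i^T\mR_k'\va_j=\va_i^T\diag{\vr_k}\va_j=s_k^{m_D}(i,j),
\]
so $\tS^{m_R}=\tS^{m_D}$ as tensors and hence $\pi(\tS^{m_R})=\pi(\tS^{m_D})$. Because $m_D$ was arbitrary, this yields $\pi(\cM^{\text{DISTMULT}}_r)\subseteq\pi(\cM^{\text{RESCAL}}_r)$, which is exactly the claimed subsumption.

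There is essentially no obstacle here: the construction is the identity embedding on the entity side together with a diagonal embedding on the relation side, and equality of the scoring tensors is immediate from the definitions. The only item worth stating explicitly is the size bookkeeping, namely that $\diag{\vr_k}$ lives in $\bR^{r\times r}$ without any padding, so the subsumption is witnessed at equal size $r$ rather than requiring an increase in the embedding dimension.
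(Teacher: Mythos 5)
Your proposal is correct and matches the paper's reasoning exactly: the paper treats this theorem as an immediate consequence of the fact that DISTMULT is RESCAL with a diagonality constraint on the relation matrices, which is precisely the identity-on-entities, $\mR_k'=\diag{\vr_k}$ construction you spell out. Your version simply makes explicit the verification and the size bookkeeping that the paper leaves implicit.
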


\subsection{Universality}

We say that class $M^t$ is \emph{universal} if
$\pi(\cM^t)=\pi(\bR^{N\times N\times K})$, i.e., any ranking tensor can be
expressed. As with subsumption, universality does by no means imply that a model
class is suitable for use in practice. If a model class is not universal,
however, care must be taken because certain relations cannot be modeled.

A direct consequence of Th.~\ref{th:sub_T_R} is:
\begin{corollary}
  $M^\text{TransE}$ is not universal.
\end{corollary}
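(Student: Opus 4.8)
The plan is to read the corollary off Theorem~\ref{th:sub_T_R} directly, since non-subsumption already hands us a ranking tensor that no TransE model can produce. Recall that $M^{\text{TransE}}$ is universal precisely when $\pi(\cM^{\text{TransE}}) = \pi(\bR^{N\times N\times K})$, i.e.\ when every possible ranking tensor is realized by some TransE model. Because a scoring tensor of any TransE model lies in $\bR^{N\times N\times K}$, we always have the inclusion $\pi(\cM^{\text{TransE}}) \subseteq \pi(\bR^{N\times N\times K})$; thus non-universality amounts to showing this inclusion is strict, i.e.\ exhibiting a single legitimate ranking tensor outside $\pi(\cM^{\text{TransE}})$.

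First I would instantiate Theorem~\ref{th:sub_T_R} at $r=2$, which asserts that $M^{\text{TransE}}$ does not subsume $M^{\text{RESCAL}}_2$. Unfolding the definition of subsumption, this says $\pi(\cM^{\text{RESCAL}}_2) \not\subseteq \pi(\cM^{\text{TransE}})$, so there is a ranking tensor $\mP \in \pi(\cM^{\text{RESCAL}}_2)$ with $\mP \notin \pi(\cM^{\text{TransE}})$. Concretely, $\mP$ can be taken to be the ranking tensor $\pi(\tS^{m_R})$ produced by the explicit RESCAL counterexample $m_R$ constructed in the proof of Theorem~\ref{th:sub_T_R}.

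Next I would observe that $\mP$ is a bona fide element of the full ranking-tensor set: since $\mP = \pi(\tS^{m_R})$ for a scoring tensor $\tS^{m_R}\in\bR^{N\times N\times K}$, the definition of $\pi$ gives $\mP\in\pi(\bR^{N\times N\times K})$. Hence $\mP$ witnesses that $\pi(\cM^{\text{TransE}})$ is a \emph{proper} subset of $\pi(\bR^{N\times N\times K})$, the two sets cannot be equal, and therefore $M^{\text{TransE}}$ is not universal.

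There is essentially no obstacle here; the entire content is already supplied by Theorem~\ref{th:sub_T_R}. The only point worth stating explicitly is the trivial closure observation that a ranking tensor arising from a RESCAL scoring tensor is itself a member of $\pi(\bR^{N\times N\times K})$, which is immediate from how $\pi$ acts on tensors. I would keep the proof to one or two sentences, simply citing the theorem and noting strictness of the inclusion.
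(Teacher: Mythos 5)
Your proposal is correct and is exactly the paper's route: the paper states the corollary as a direct consequence of Theorem~\ref{th:sub_T_R}, whose proof supplies the explicit RESCAL ranking tensor that no TransE model can realize, and your closure observation that this tensor lies in $\pi(\bR^{N\times N\times K})$ is the only (trivial) extra step needed.
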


We establish the universality of RESCAL, HolE, and ComplEx next.
\begin{theorem}\label{th:R_universal}
  $M^\text{RESCAL}_N$ is universal.
\end{theorem}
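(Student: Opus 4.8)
The plan is to exploit the fact that at size $r=N$ the entity matrix has just enough columns to be invertible, which completely decouples the $K$ scoring matrices from one another. The cleanest choice is to take the entity matrix to be the identity, $\mA=\mI_{N\times N}$, so that each entity embedding is a distinct standard basis vector, $\va_i=\ve_{i,N}$. The bilinear scoring function then collapses to $s_k^m(i,j)=\ve_{i,N}^T\mR_k\ve_{j,N}=[\mR_k]_{ij}$; that is, the scoring matrix $\mS_k^m$ is exactly the relation matrix $\mR_k$, entry for entry.

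From here I would argue as follows. Given any target scoring tensor $\tS\in\bR^{N\times N\times K}$, simply set $\mR_k=\tS_{(k)}$ for each $k$; the resulting RESCAL model of size $N$ has scoring tensor exactly $\tS$. Hence $\cM^{\text{RESCAL}}_N=\bR^{N\times N\times K}$, i.e.\ RESCAL of size $N$ realizes \emph{every} real scoring tensor, not merely every ranking. Universality is then immediate: an arbitrary ranking tensor $\mP\in\pi(\bR^{N\times N\times K})$ is in particular a real tensor, so taking $\mR_k=-\mP_{(k)}$ yields a model $m$ with scoring tensor $-\mP$, and by the identity $\pi(-\mP)=\mP$ already noted for ranking tensors we get $\pi(\tS^m)=\mP$. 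Therefore $\pi(\cM^{\text{RESCAL}}_N)=\pi(\bR^{N\times N\times K})$, which is exactly the claim.

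There is essentially no hard step here: once the entity embeddings are chosen to be linearly independent, the relation matrices can be set freely and independently across relations, so any prescribed scores can be hit on the nose. The only substantive point is that $r=N$ is the smallest size at which an invertible entity matrix is available, which is why the universal bound appears at $N$. More generally, any invertible $\mA$ works, with $\mR_k=\mA^{-1}\mS_k\mA^{-T}$ recovering a prescribed scoring matrix $\mS_k$; the identity choice merely makes this transparent and avoids inverting anything.
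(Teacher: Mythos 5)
Your proof is correct and follows essentially the same route as the paper's: set $\mA=\mI_N$ so that $\mS^m_k=\mR_k$, then for a target ranking tensor $\tP$ take $\mR_k=-\tP_{(k)}$ and invoke $\pi(-\tP)=\tP$. The additional observation that any invertible $\mA$ works via $\mR_k=\mA^{-1}\mS_k\mA^{-T}$ is a fine generalization but not needed.
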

\begin{proof}
  Pick any ranking tensor $\tP\in\pi(\bR^{N\times N\times K})$. Consider the
  model $m\in M^\text{RESCAL}_N$ with parameterization $\mA=\mI_N$ and
  $\mR_k = -\tP_{(k)}$. Then $\mS^m_k = \mA\mR_k\mA^T = -\tP_{(k)}$ and thus
  $\tS^m=-\tP$. Using the fact that $\pi(-\tP)=\tP$, we conclude that
  $\tP\in \pi(\cM^\text{RESCAL}_N)$.
\end{proof}

Note that models in $M^\text{RESCAL}_N$ have very large embeddings. It is more
involved to establish whether or not $M^\text{RESCAL}_r$ is universal for some
$r<N$. We approach this question below and show that $r$ needs to be linear in
$N$ to obtain universality.

\begin{theorem}\label{th:R_not_universal}
  $M^\text{RESCAL}_{\floor{N/32-1}}$ is not universal.
\end{theorem}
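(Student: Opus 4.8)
The plan is to use a polynomial counting argument: a RESCAL model of size $r$ has only $n=Nr+r^2$ real parameters, and I will show that so few parameters cannot realize all $\pi(\bR^{N\times N\times K})$ rankings once $r\le\floor{N/32-1}$. Since non-universality only requires exhibiting one unrepresentable ranking tensor, I first reduce to $K=1$: it suffices to find a ranking matrix $\mP\in\pi(\bR^{N\times N})$ that is \emph{not} of the form $\pi(\mA\mR\mA^T)$ for any $\mA\in\bR^{N\times r}$, $\mR\in\bR^{r\times r}$, and then place $\mP$ as the first slice of a ranking tensor (the other relation matrices do not affect that slice). When $r\le0$ (i.e.\ $N<64$) a size-$r$ model represents only the all-tied ranking, so non-universality is immediate from $N\ge2$; hence assume $r\ge1$.

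For the counting, observe that a single RESCAL slice $\mS=\mA\mR\mA^T$ is a polynomial map of the $n=Nr+r^2$ variables $(\mA,\mR)$, and each entry $s_{ij}=\va_i^T\mR\va_j$ is trilinear, hence a degree-$3$ polynomial. The ranking $\pi(\mS)$ is determined by the signs of the $m=\binom{N^2}{2}<N^4/2$ pairwise differences $s_{ij}-s_{i'j'}$, each again of degree $3$. By Warren's theorem, the number of distinct strict sign patterns (all differences nonzero) realizable by $m$ polynomials of degree $d$ in $n$ variables is at most $(4edm/n)^n$; here $d=3$ and $m\ge n$ holds comfortably. Taking logarithms and using $r<N/32$, so that $n=Nr+r^2<N^2/16$, together with the fact that $t\mapsto t\log(A/t)$ is increasing for $t<A/e$, I can bound $\log\bigl[(4edm/n)^n\bigr]=\bigO{N^2\log N}$ with a small leading constant (roughly $(N^2\log N)/8$ plus lower-order terms). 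On the other hand, every strict total order on the $N^2$ positions yields a distinct full ranking matrix, so there are at least $(N^2)!$ distinct rankings, and by Stirling $\log\bigl((N^2)!\bigr)\ge 2N^2\log N-N^2$.

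The final comparison is where the constant $32$ earns its place: it is chosen so that the realizable-sign-pattern bound $\approx(N^2\log N)/8$ stays strictly below $\log\bigl((N^2)!\bigr)\approx 2N^2\log N$ for every $N\ge64$ (where $r\ge1$), with ample slack to absorb Warren's absolute constant and the lower-order terms. Consequently strictly fewer rankings are realizable by $M^{\text{RESCAL}}_{\floor{N/32-1}}$ than exist, so at least one ranking matrix, and therefore at least one ranking tensor, cannot be represented.

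The main obstacle I anticipate is the careful bookkeeping in the application of Warren's theorem: one must correctly count the variables ($n=Nr+r^2$) and the degree ($d=3$), verify the hypothesis $m\ge n$, and—most delicately—confirm that the threshold $N/32$ makes the decisive inequality hold \emph{uniformly} for all $N\ge64$ rather than merely asymptotically. A secondary subtlety is keeping the sign-pattern count clean: by counting only tie-free rankings on one side and strict sign conditions on the other, I avoid having to bound the number of sign conditions involving equalities, which is what makes the two-sided comparison legitimate.
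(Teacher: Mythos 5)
Your proof is correct, but it takes a genuinely different route from the paper's. The paper reduces non-universality to a \emph{consistency} question: it invokes the Alon--Frankl--R\"odl result that some Boolean matrix $\mB\in\{0,1\}^{N\times N}$ has rounding rank at least $N/32$, and observes that any RESCAL scoring matrix of size $r$ has matrix rank at most $r$, so a consistent (1s-above-0s) scoring matrix of size $\floor{N/32-1}$ would certify $\rrank(\mB)\le N/32-1$, a contradiction. You instead run a direct parameter-counting argument: the realizable tie-free rankings inject into the strict sign patterns of the $\binom{N^2}{2}$ degree-$3$ difference polynomials in the $n=Nr+r^2$ parameters, Warren's theorem caps these at $(4edm/n)^n$ with $\log$ roughly $(N^2\log N)/8$, and this falls short of $\log\bigl((N^2)!\bigr)\approx 2N^2\log N$; your bookkeeping (degree, variable count, $m\ge n$, monotonicity of $t\mapsto t\log(A/t)$, uniformity for $N\ge 64$, and the explicit handling of $r\le 0$) all checks out. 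The two approaches are cousins --- the AFR lower bound on rounding/sign rank is itself proved by a sign-pattern counting of this kind --- but they buy different things. The paper's route is shorter, and crucially its witness is a \emph{two-level} ranking tensor, which is what powers the remark after the theorem, the corollary for general bilinear models, and the later non-consistency entries in Table~\ref{tab:summary}; your witness is a full tie-free ranking, so your argument establishes non-universality but not the consistency failure. In exchange, your method is self-contained modulo Warren's theorem and is asymptotically stronger: since $n\approx(c+c^2)N^2$ for $r=cN$, the same comparison rules out universality for all $r\le cN$ with $c+c^2<1$, i.e.\ up to roughly $0.61N$, well beyond $N/32$.
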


The proof (given below) makes use of the notion of rounding
rank~\cite{0003GM16}. Given a \emph{rounding threshold} $\tau \in \mathbb{R}$,
denote by
\[
  \round_\tau(x) = \begin{cases}
    1 & \text{if} \ x \geq \tau\\
    0 & x < \tau
  \end{cases}
\]
the rounding function. We apply $\round_\tau$ to matrices and tensors by
rounding element-wise. In particular, when $\mA \in \mathbb{R}^{m\times n}$ is
any real-valued matrix, then $\round_\tau (\mA)$ is the $m \times n$ binary
matrix with $[\round_\tau (\mA)]_{ij} = \round_\tau (a_{ij} )$. We assume
$\tau=1/2$ unless explicitly stated otherwise and write $\round$ for
$\round_{1/2}$.

\begin{definition}
  For $\tau \in \mathbb{R}$, the \emph{rounding rank w.r.t.~$\tau$} of a binary
  matrix $\mB\in\sset{0,1}^{m\times n}$ is given by
  \[
    \text{rrank}_\tau (\mB) = \min \sset{ \text{rank}(\mA) : \mA \in
      \mathbb{R}^{m\times n} ,\round_\tau (\mA) = \mB }.
  \]
\end{definition}

Given a Boolean matrix $\mB$, say that a scoring matrix $\mS$ is
\emph{consistent} with $\mB$ if
\begin{equation}
  \label{eq:desire1}
  b_{ij} =1 \ \text{and} \ b_{i'j'} = 0 \implies  \pi_{ij}(\mS) > \pi_{i'j'}(\mS).
\end{equation}

The rounding rank can be interpreted as the minimum rank of a scoring matrix
that is consistent with $\mB$.
\citet{0003GM16} proved that the rounding rank differs by at most 1 for different
choices of $\tau$ and that it is connected to the \emph{sign
  rank}~\cite{AlonMY16}.  The rounding rank can be much smaller than the matrix
rank in practice, which partially explains the the success of bilinear models.

\begin{proof}[Proof (of Th.~\ref{th:R_not_universal})]
  \citet{AlonFR85} showed that there exist Boolean matrices in
  $\sset{0,1}^{N\times N}$ with rounding rank at least $N/32$ for every
  $N$. Pick any such matrix $\mB$. The proof is by contradiction. Consider $K=1$
  and suppose there exists a scoring matrix
  $\mS\in M^\text{RESCAL}_{\floor{N/32-1}}$ that satisfies
  Eq.~\eqref{eq:desire1}. Observe that $\mS$ has rank at most $\floor{N/32-1}$
  because it is defined by a product involving a
  $\floor{N/32-1}\times\floor{N/32-1}$ matrix. But this implies that
  $\rrank(\mB)\le N/32-1$, a contradiction.
\end{proof}
Note that the proof implies that there exists ranking tensors with just two
distinct ranks that cannot be expressed by
$M^\text{RESCAL}_{\floor{N/32-1}}$. Since RESCAL is an unconstrained bilinear
model, we can generalize to other model classes.

\begin{corollary}
  No model class that only contains bilinear models of size less than
  $\frac{N}{32}$ is universal.
\end{corollary}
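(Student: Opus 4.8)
The plan is to observe that the argument behind Theorem~\ref{th:R_not_universal} never used anything about RESCAL beyond the rank bound on its scoring matrices, and that exactly the same bound holds for \emph{any} bilinear model of size $r$. Concretely, if $m$ is a bilinear model of size $r$ with entity matrix $\mA\in\bR^{N\times r}$ and relation matrices $\mR_k\in\bR^{r\times r}$, then its scoring matrix for relation $k$ is $\mS_k^m=\mA\mR_k\mA^T$, which factors through an $r\times r$ matrix and hence satisfies $\rank{\mS_k^m}\le r$. So every scoring matrix producible by a bilinear model of size less than $N/32$ has rank strictly below $N/32$, regardless of which constrained subclass of bilinear models it comes from.

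First I would invoke the result of \citet{AlonFR85}, already used above, to fix a Boolean matrix $\mB\in\sset{0,1}^{N\times N}$ with $\rrank(\mB)\ge N/32$. Next I would take $K=1$ and the ranking matrix with exactly two distinct ranks that encodes $\mB$ (the one produced in the remark following Theorem~\ref{th:R_not_universal}): the positions where $b_{ij}=1$ receive one rank value and the positions where $b_{ij}=0$ receive the adjacent rank value, arranged so that any scoring matrix realizing this ranking is consistent with $\mB$ in the sense of Eq.~\eqref{eq:desire1}.

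Suppose, for contradiction, that some model class $M$ containing only bilinear models of size less than $N/32$ were universal. Then there is a model $m\in M$ whose ranking tensor equals the one just constructed; its single scoring matrix $\mS_1^m$ is therefore consistent with $\mB$. By the first paragraph, $\rank{\mS_1^m}<N/32$, and by the interpretation of rounding rank as the minimum rank of a scoring matrix consistent with $\mB$ (stated above), this forces $\rrank(\mB)<N/32$, contradicting the choice of $\mB$. Hence no such universal class exists.

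The main obstacle is the second step rather than the rank bound: one must verify that the two-rank object is a legitimate ranking matrix and that realizing it genuinely forces Eq.~\eqref{eq:desire1}, and one must be careful with the sign and threshold conventions linking consistency to rounding rank, since the connection established by \citet{0003GM16} is stated only up to the additive change of $1$ incurred by varying $\tau$. Such constant shifts are harmless, however: a change by $1$ does not affect the $N/32$ threshold, so the contradiction survives.
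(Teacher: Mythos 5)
Your argument is correct and is exactly the one the paper intends: it leaves this corollary unproved beyond the remark that the rank bound $\rank{\mA\mR_k\mA^T}\le r$ holds for every bilinear model, not just RESCAL, so the Alon--Frankl--R\"odl matrix and the rounding-rank contradiction from Theorem~\ref{th:R_not_universal} carry over verbatim. You in fact spell out more than the paper does, including the threshold-shift (off-by-one in $\tau$) subtlety that the paper's own proof of Theorem~\ref{th:R_not_universal} silently absorbs into its $\floor{N/32-1}$ slack.
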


\begin{theorem}\label{complexuni}
  $M^\text{ComplEx}_{KN}$ and $M^\text{HolE}_{2KN+1}$ are universal.
\end{theorem}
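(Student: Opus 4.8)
The plan is to obtain the HolE statement for free from the ComplEx statement, and to prove the ComplEx statement by decoupling the $K$ relations into $K$ independent single-relation problems of size $N$. For the reduction, recall the inclusion $\cM^{\text{HolE}}_{2r+1}\supseteq\cM^{\text{ComplEx}}_r$ of \citet{HayashiS17} cited above: instantiating it with $r=KN$ gives $\pi(\cM^{\text{HolE}}_{2KN+1})\supseteq\pi(\cM^{\text{ComplEx}}_{KN})$, so once $M^{\text{ComplEx}}_{KN}$ is shown to be universal the universality of $M^{\text{HolE}}_{2KN+1}$ is immediate. Hence the whole burden rests on $M^{\text{ComplEx}}_{KN}$.

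For ComplEx I would first isolate the single-relation core. Writing the ComplEx scoring matrix of relation $k$ as $\mS_k=\Real(\mA\,\diag{\vr_k}\,\mA^{*})$, where $\mA^{*}$ is the conjugate transpose, note that if $\vr_k$ is supported on a block of $N$ coordinates then only the corresponding $N$ columns of $\mA$ affect $\mS_k$. The key lemma is therefore: \emph{for every real matrix $\mM\in\bR^{N\times N}$ there exist $\mU\in\bC^{N\times N}$ and $\vw\in\bC^{N}$ with $\Real(\mU\,\diag{\vw}\,\mU^{*})=\mM$.} To prove it I would observe that the complex matrix $\mZ\eqdef\mM+\sqrt{-1}\,\mM^{T}$ has real part $\mM$ and is \emph{normal}, since a direct expansion gives $\mZ\mZ^{*}=\mZ^{*}\mZ=(\mM\mM^{T}+\mM^{T}\mM)+\sqrt{-1}\,(\mM^{T}\mM^{T}-\mM\mM)$. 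By the spectral theorem for normal matrices $\mZ=\mU\,\diag{\vw}\,\mU^{*}$ for a unitary $\mU$ and complex $\vw$, and taking real parts yields $\Real(\mU\,\diag{\vw}\,\mU^{*})=\Real(\mZ)=\mM$. This single-relation reconstruction is the main obstacle; the normality of $\mM+\sqrt{-1}\,\mM^{T}$ is the crucial observation, as it is exactly what lets the embedding dimension per relation be $N$ rather than larger.

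With the lemma in hand I would assemble the full model by block decoupling, which is precisely what forces the size up to $KN$: the entity embeddings are shared across relations, and diagonal relation matrices of size $N$ cannot realize $K$ arbitrary scoring matrices from a single shared $\mA$. So I split the $KN$ coordinates into $K$ blocks of size $N$, one per relation. Given an arbitrary ranking tensor $\tP\in\pi(\bR^{N\times N\times K})$, set $\mM_k\eqdef-\tP_{(k)}$; since $\pi(-\mP)=\mP$ for any ranking matrix, $\pi(\mM_k)=\tP_{(k)}$. Apply the lemma to obtain $\mU_k,\vw_k$ with $\Real(\mU_k\,\diag{\vw_k}\,\mU_k^{*})=\mM_k$, place the rows of $\mU_k$ into block $k$ of the entity matrix $\mA'\in\bC^{N\times KN}$, and let $\vr_k'\in\bC^{KN}$ agree with $\vw_k$ on block $k$ and vanish elsewhere. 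Because $\vr_k'$ is zero off block $k$, the remaining blocks contribute nothing to relation $k$, so the resulting model $m$ satisfies $\mS^{m}_k=\mM_k$ for every $k$, i.e.\ $\tS^{m}=-\tP$ and hence $\pi(\tS^{m})=\pi(-\tP)=\tP$. Thus every ranking tensor lies in $\pi(\cM^{\text{ComplEx}}_{KN})$, establishing universality of $M^{\text{ComplEx}}_{KN}$ and, through the Hayashi--Shimbo inclusion, of $M^{\text{HolE}}_{2KN+1}$.
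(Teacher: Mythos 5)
Your proof is correct and follows essentially the same route as the paper: decouple the $K$ relations into block-diagonal relation embeddings over $K$ blocks of size $N$, and reduce each block to the fact that any real $N\times N$ matrix is the real part of $\mU\diag{\vw}\mU^{*}$ with $\mU$ unitary and $\vw$ diagonal, with HolE then following from the Hayashi--Shimbo inclusion. The only difference is that the paper cites \citet{Trouillon2016a} for that single-matrix decomposition, whereas you reprove it directly via the normality of $\mM+\sqrt{-1}\,\mM^{T}$ and the spectral theorem, which is the same underlying argument.
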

\begin{proof}
  Pick any scoring tensor $\tS\in\bR^{N\times N\times K}$. \citet{Trouillon2016a}
  showed that for every $N\times N$ real matrix and thus every scoring matrix
  $\mS_k$, there exists $\mA_k,\mD_k \in \mathbb{C}^{N\times N}$, where $\mA_k$
  is unitary, $\mD_k$ diagonal, and $\mS_k = \Real(\mA_k\mD_k\mA_k^*)$. Now
  consider the ComplEx model with
  \begin{align*}
    \mA&=
         \begin{pmatrix}
           \mA_1 & \mA_2 & \cdots & \mA_K
         \end{pmatrix} \\
    \mR_k &= \diag{\vzero_{N\times N}, \ldots, \mD_k, \ldots, \vzero_{N\times
            N}}
  \end{align*}
  We can verify $\mS_k=\text{Re}(\mA\mR_k\mA^*)$ for each $k$. Thus
  $\tS\in\cM^\text{ComplEx}_{KN}$ and it follows that $M^\text{ComplEx}_{KN}$ is
  universal. The universiality of $M^\text{HolE}_{2KN+1}$ follows from the fact
  that $\cM^{\text{HolE}}_{2r+1} \supseteq \cM^{\text{ComplEx}}_r$ for every
  $r$~\cite{HayashiS17}.
\end{proof}
Finally, since DISTMULT's relation matrix is diagonal and thus symmetric, DISTMULT cannot model asymmetric relations.
\begin{theorem}
  $\cM^\text{DISTMULT}$ is not universal.
\end{theorem}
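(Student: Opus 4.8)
The plan is to exploit the structural symmetry forced by DISTMULT's diagonal relation matrices. First I would observe that for any $m\in M^\text{DISTMULT}_r$ and any relation $k$, the matrix $\diag{\vr_k}$ is diagonal and hence symmetric, so
\[
  s_k^{m}(i,j)=\va_i^T\diag{\vr_k}\va_j=\va_j^T\diag{\vr_k}\va_i=s_k^{m}(j,i)
\]
for all $i,j$. Thus every frontal slice $\mS_k^m$ of a DISTMULT scoring tensor is a symmetric matrix. I would then push this symmetry through to rankings: since $s_k^m(i,j)=s_k^m(j,i)$, the two entries take the same value and therefore receive the same dense rank, i.e. $\pi_{ij}(\mS_k^m)=\pi_{ji}(\mS_k^m)$. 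Hence every ranking tensor in $\pi(\cM^\text{DISTMULT})$ has symmetric frontal slices, and it suffices to exhibit a single ranking tensor in $\pi(\bR^{N\times N\times K})$ with an asymmetric slice.

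For the counterexample I would give an explicit scoring tensor and take its ranking, which by definition lands in $\pi(\bR^{N\times N\times K})$. Concretely, define $\tS$ by setting its first frontal slice to have $s_1(1,1)=s_1(2,2)=3$, $s_1(1,2)=2$, $s_1(2,1)=1$, and all remaining entries (of this slice and of every other slice) equal to $0$; this is well defined for every $N\ge2$ and $K\ge1$. In the first slice the distinct values $3,2,1,0$ produce dense ranks giving $\pi_{12}=2\neq3=\pi_{21}$, so $\pi(\tS)$ has an asymmetric first slice and hence cannot equal $\pi(\tS^m)$ for any $m\in M^\text{DISTMULT}$. Since $\pi(\tS)\in\pi(\bR^{N\times N\times K})\setminus\pi(\cM^\text{DISTMULT})$, the class is not universal.

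There is no genuine obstacle here: the result is an immediate symmetry observation, matching the remark preceding the statement. The only points needing a line of care are verifying that equal scores yield equal dense ranks (so that symmetry of $\mS_k^m$ transfers to $\pi(\mS_k^m)$) and confirming that the chosen entries of the counterexample indeed induce distinct ranks in positions $(1,2)$ and $(2,1)$ uniformly over all admissible $N$ and $K$.
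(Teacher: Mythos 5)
Your proof is correct and follows exactly the route the paper intends: the paper gives no formal proof but justifies the theorem with the one-line observation that diagonal relation matrices force symmetric scoring slices, which is precisely the symmetry argument you formalize and complete with an explicit asymmetric counterexample. No issues.
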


\subsection{Consistent Ranking}

Suppose we are given an $N\times N\times K$ Boolean tensor $\tB$ and we look for a ranking
tensor $\tP$ that is consistent with $\tB$ in each frontal slice, i.e.,
$p_{ijk}<p_{i'j'k}$ whenever $b_{ijk}=1$ and $b_{i'j'k}=0$. In this section, we
establish upper bounds on the size\footnote{The expressive power of
  models considered here is non-decreasing as their size grows.}
that various bilinear models need to express a ranking that is consistent with
$\tB$, i.e., which ranks $1$s above $0$s. Here we think of $\tB$ as the correct
completed KB; there is no hope for a model class not consistent with $\tB$ to
recover the correct KB.

Note that even if a model class is not universal, it may still contain
consistent models for all Boolean tensors. This is not the case for DISTMULT and
TransE, however. In particular, since DISTMULT produces symmetric scoring
matrices, DISTMULT does not contain models consistent with any Boolean tensor
that has an asymmetric frontal slice. For TransE, the proof of
Th.~\ref{th:sub_T_R} implies that TransE does not contain models for Boolean
tensors with both 0s and 1s on the main diagonal of any of its frontal slices.

\begin{theorem}
  There exists Boolean tensors $\tB$ such that no ranking tensor in
  $\pi(\cM^\text{DISTMULT})$ is consistent with $\tB$.
\end{theorem}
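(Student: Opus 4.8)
The plan is to exploit the symmetry of DISTMULT's scoring matrices. Since the relation matrix $\diag{\vr_k}$ is diagonal and hence symmetric, every DISTMULT model $m$ satisfies $s_k^m(i,j) = \va_i^T\diag{\vr_k}\va_j = \va_j^T\diag{\vr_k}\va_i = s_k^m(j,i)$, so each scoring matrix $\mS_k^m$ is symmetric. First I would record the immediate consequence that the induced ranking matrix is symmetric as well: because $\pi_{ij}(\mS)$ is defined purely as the dense rank of the value $s_{ij}$ within the multiset of entries of $\mS$, and $s_{ij}=s_{ji}$, equal scores receive equal dense ranks, so $\pi_{ij}(\mS_k^m)=\pi_{ji}(\mS_k^m)$ for all $i,j,k$ and all DISTMULT models $m$.

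Next I would exhibit a concrete Boolean tensor whose frontal slice is asymmetric off the main diagonal. It suffices to take $N=2$ with a single relation and set $b_{121}=1$ and $b_{211}=0$ (the remaining entries, and all other slices, being arbitrary). Consistency of a ranking tensor $\tP$ with $\tB$ in its first slice, applied to the pair consisting of $(1,2)$ with $b_{121}=1$ and $(2,1)$ with $b_{211}=0$, forces a strict inequality between $\pi_{12}$ and $\pi_{21}$ in that slice, as demanded by Eq.~\eqref{eq:desire1}.

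To finish, I would combine the two observations. Any ranking tensor in $\pi(\cM^\text{DISTMULT})$ arises as $\pi(\tS^m)$ for some DISTMULT model $m$, and by the first step its first frontal slice satisfies $\pi_{12}=\pi_{21}$. This directly contradicts the strict inequality required for consistency with $\tB$. Since the symmetry argument applies uniformly to every DISTMULT model, \emph{no} ranking tensor in $\pi(\cM^\text{DISTMULT})$ can be consistent with $\tB$, which is exactly the claim.

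There is essentially no hard step here: the entire content is the symmetry of diagonal relation matrices. The only points requiring care are confirming that the dense-rank map $\pi$ preserves symmetry—which is immediate from its definition in terms of the multiset of scores—and choosing a witness tensor that genuinely violates symmetry on an off-diagonal pair of some slice, so that consistency and symmetry cannot be reconciled.
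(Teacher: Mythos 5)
Your proposal is correct and matches the paper's own argument: the paper likewise observes that DISTMULT's diagonal relation matrices force symmetric scoring matrices, so no DISTMULT model can be consistent with a Boolean tensor having an asymmetric frontal slice. Your explicit witness ($b_{121}=1$, $b_{211}=0$) and the remark that dense ranking preserves symmetry simply spell out details the paper leaves implicit.
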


\begin{theorem}
  There exists Boolean tensors $\tB$ such that no ranking tensor in
  $\pi(\cM^\text{TransE})$ is consistent with $\tB$.
\end{theorem}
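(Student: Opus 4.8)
The plan is to reuse the diagonal degeneracy of TransE scores already isolated in the proof of Theorem~\ref{th:sub_T_R}. First I would recall that for any TransE model $m_T$ with entity matrix $\mA$ and relation matrix $\mR$, the diagonal score satisfies $s_k^{m_T}(i,i) = -\norm{\va_i+\vr_k-\va_i}_2^2 = -\norm{\vr_k}_2^2$, which does not depend on the entity $i$. Hence, for each fixed relation $k$, every diagonal entry of the scoring matrix $\mS_k^{m_T}$ carries the identical value $-\norm{\vr_k}_2^2$, and therefore the identical dense rank: $\pi_{ii}(\mS_k^{m_T}) = \pi_{i'i'}(\mS_k^{m_T})$ for all entities $i,i'$.

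Next I would exhibit an explicit Boolean tensor that no TransE ranking can match. It suffices to take the smallest instance $N=2$, $K=1$ and let the single frontal slice $\mB_1$ satisfy $b_{11}=1$ and $b_{22}=0$ on its main diagonal; the two off-diagonal entries are immaterial and may be fixed arbitrarily. Consistency with $\tB$ requires that the $1$-entry at position $(1,1)$ and the $0$-entry at position $(2,2)$ receive strictly different ranks in the first frontal slice. The first step, however, shows that any TransE model necessarily assigns them the same rank, $\pi_{11}(\mS_1^{m_T}) = \pi_{22}(\mS_1^{m_T})$, which is the desired contradiction. Because this argument invokes no property of the particular TransE model beyond the cancellation of translations on the diagonal, it rules out every model in $\cM^\text{TransE}$ simultaneously, so no ranking tensor in $\pi(\cM^\text{TransE})$ is consistent with this $\tB$.

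I do not expect a genuine obstacle here: the entire force of the argument lies in the single observation that the translation $\vr_k$ cancels whenever subject and object coincide, which is precisely the degeneracy already extracted for Theorem~\ref{th:sub_T_R}. The only point requiring mild care is to phrase the final contradiction at the level of the dense-rank function $\pi$ rather than the raw scores, since the statement concerns rankings; and to observe that the consistency condition only demands that the two diagonal ranks \emph{differ}, so the orientation of the inequality (whether $1$s rank above or below $0$s) is irrelevant to the argument.
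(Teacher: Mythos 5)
Your proposal is correct and takes essentially the same route as the paper, which proves this theorem via the remark that the proof of Theorem~\ref{th:sub_T_R} forces $s_k^{m_T}(i,i)=-\norm{\vr_k}_2^2$ for every entity $i$, so all diagonal entries of a slice share one dense rank and any slice with both a $0$ and a $1$ on its main diagonal cannot be consistently ranked. Your explicit $N=2$, $K=1$ witness and the observation that only the strictness (not the orientation) of the rank inequality matters are exactly the intended instantiation of that remark.
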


For RESCAL, which is universal, we can make use of the rounding-rank decomposition to obtain a tighter bound than the one implied by its universality.

\begin{theorem} \label{th:rrank_R} For any boolean tensor $\tB$,
  $\pi(\cM_r^\text{RESCAL})$ contains a ranking tensor consistent with $\tB$ if
  \[ r\ge \min\sset{N,\; 2\sum_{k=1}^K \rrank(\mB_{k})}.\]
\end{theorem}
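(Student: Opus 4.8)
The plan is to produce a consistent model whose size equals the minimum, $\min\{N,\,2\sum_k\rrank(\mB_k)\}$, and then pad it up to any larger $r$ with zero columns and zero relation-blocks (expressiveness is non-decreasing in size). If the minimum is $N$, a consistent model of size $N$ is immediate: a ranking tensor consistent with $\tB$ certainly exists (in each frontal slice give every $1$-position a strictly better rank than every $0$-position, breaking ties arbitrarily), and $M^\text{RESCAL}_N$ realizes \emph{every} ranking tensor by Theorem~\ref{th:R_universal}. The real work is the other case, where the minimum is $2\sum_k\rrank(\mB_k)$ and I must exhibit an explicit model of exactly that size.

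Write $\rho_k=\rrank(\mB_k)$ for the $k$-th frontal slice $\mB_k$ of $\tB$. By the definition of rounding rank—and its stated interpretation as the minimum rank of a scoring matrix consistent with $\mB_k$—there is a matrix $\mM_k\in\bR^{N\times N}$ of rank $\rho_k$ with $\round(\mM_k)=\mB_k$. Since every $1$-entry of $\mM_k$ has value at least $1/2$ while every $0$-entry has value below $1/2$, each $1$-entry receives a strictly better dense rank than each $0$-entry, so $\mM_k$ is consistent with $\mB_k$. I then take a rank factorization $\mM_k=\mU_k\mV_k^T$ with $\mU_k,\mV_k\in\bR^{N\times\rho_k}$.

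The main obstacle is that RESCAL forces a single entity matrix $\mA$ shared across all relations, whereas the factorizations above are chosen independently per slice. I resolve this by concatenating all factors into one wide matrix and letting each relation matrix select only the block it needs:
\[
  \mA=\begin{pmatrix}\mU_1 & \mV_1 & \cdots & \mU_K & \mV_K\end{pmatrix}\in\bR^{N\times r},\qquad r=2\sum_{k=1}^K\rho_k,
\]
with $\mR_k$ the block matrix whose only nonzero block is an identity $\mI_{\rho_k}$, placed so that it pairs the $\mU_k$ block of the subject embedding with the $\mV_k$ block of the object embedding. Block multiplication then gives $\mS_k^m=\mA\mR_k\mA^T=\mU_k\mV_k^T=\mM_k$ for every $k$, so each frontal slice of $\tS^m$ is $\mM_k$ and $\pi(\tS^m)\in\pi(\cM_r^\text{RESCAL})$ is consistent with $\tB$. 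The factor $2$ (the two halves $\mU_k,\mV_k$) and the sum over $k$ (one disjoint block per relation) are precisely what produce the claimed bound $2\sum_k\rrank(\mB_k)$; the only routine point left is to check the block product above and the zero-padding used to lift the construction from size $\min\{N,\,2\sum_k\rho_k\}$ to an arbitrary $r$ above it.
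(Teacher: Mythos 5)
Your proof is correct and follows essentially the same route as the paper's: a rank-$\rho_k$ matrix rounding to $\mB_k$ is factored as $\mU_k\mV_k^T$ (the paper's $\mL_k\mQ_k^T$), the factors are concatenated into a single shared entity matrix, and each $\mR_k$ is a block-diagonal selector pairing the subject's $\mU_k$ block with the object's $\mV_k$ block, yielding $\mA\mR_k\mA^T=\mU_k\mV_k^T$. Your explicit treatment of the $r\ge N$ case via universality and of the zero-padding to larger $r$ only spells out details the paper leaves to a footnote.
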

\begin{proof}
  The case $r\ge N$ follows from Th.~\ref{th:R_not_universal}. Denote by $r_k$
  the rounding rank of slice $\mB_k$ of $\tB$; we explicitly construct a
  consistent RESCAL model with $r=2\sum_k r_k$ (as asserted). To do so, pick any
  $\mL_k,\mQ_k\in \mathbb{R}^{N\times r_k}$ that form a \emph{rounding-rank
    decomposition} of $\mB_k$, i.e., for which $\mB_k=\round(\mL_k\mQ_k^T)$. (By
  the definition of rounding rank, such matrices always exist.) Now set
  \begin{align*}
    \va^T_i &= \begin{pmatrix}~
      [\mL_1]_{i:} & [\mQ_1]_{i:} & \cdots & [\mL_K]_{i:} & [\mQ_K]_{i:} 
    \end{pmatrix}^T  \\
    \mM_{k} &= \begin{pmatrix}
      \vzero_{r_k\times r_k}  & \mI_{r_k\times r_k}\\
      \vzero_{r_k\times r_k} & \vzero_{r_k\times r_k}
    \end{pmatrix} \\
    (\mR_k)_{ij}&= \diag{\vzero_{2r_1\times 2r_1}, \ldots, \mM_{k}, \ldots,
                  \vzero_{2r_K\times 2r_K}}
  \end{align*}
  We can now verify that $\round(\mA\mR_k\mA^T)=\mB_k$, which implies
  consistency.
\end{proof}

\begin{theorem} \label{rh:rrank_C} For any Boolean tensor $\tB$,
  $\pi(\cM_r^\text{ComplEx})$ contains a ranking tensor consistent with $\tB$ if
  \[r\ge \min\sset{KN, 2\sum_{k=1}^K \rrank(\mB_{k})}.\]
\end{theorem}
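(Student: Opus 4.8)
The plan is to prove the bound separately for each argument of the minimum and to rely on the fact (used throughout this section) that expressiveness is non-decreasing in the model size. When $r\ge KN$ the claim is immediate from Th.~\ref{complexuni}: since $M^\text{ComplEx}_{KN}$ is universal it can express \emph{every} ranking tensor, in particular one consistent with $\tB$, and enlarging $r$ only enlarges $\cM_r^\text{ComplEx}$. The substantive case is therefore $r=2\sum_k \rrank(\mB_k)$, which I would treat by an explicit construction mirroring the RESCAL argument of Th.~\ref{th:rrank_R}.

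For that construction, write $r_k=\rrank(\mB_k)$ and fix a rounding-rank decomposition $\mB_k=\round(\mL_k\mQ_k^T)$ with $\mL_k,\mQ_k\in\bR^{N\times r_k}$, so that $\mL_k\mQ_k^T=\sum_{t=1}^{r_k}\vl_k^{(t)}(\vq_k^{(t)})^T$ is a sum of $r_k$ real rank-one terms. Exactly as in the RESCAL proof I would give relation $k$ its own disjoint block of complex coordinates and let $\vr_k$ vanish outside that block, so the frontal slices decouple and it suffices, for each $k$, to produce a scoring matrix $\mS_k$ with $\round(\mS_k)=\mB_k$ using only relation $k$'s coordinates. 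Recalling that the ComplEx score has the form $\Real(\mA\diag{\vr_k}\mA^*)$, I would encode each rank-one term $\vl\vq^T$ (suppressing the $k,t$ indices) into \emph{two} complex coordinates, giving $2\sum_k r_k$ coordinates in total, precisely the asserted size, with $\round(\mA\diag{\vr_k}\mA^*)=\mB_k$ yielding consistency.

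The crux, and the step I expect to be the main obstacle, is that ComplEx cannot use an unconstrained off-diagonal block the way RESCAL does: viewed as a real bilinear model, each complex coordinate contributes only a term of the shape $p(\vx\vx^T+\vy\vy^T)+h(\vx\vy^T-\vy\vx^T)$, where $\vx,\vy$ are the real and imaginary parts of the entity coordinate and $p,h$ those of the relation coordinate. This is a scalar multiple of a \emph{positive-semidefinite} symmetric matrix plus an antisymmetric matrix built from the \emph{same} $\vx,\vy$, so a single coordinate cannot reproduce the generally asymmetric, indefinite rank-one matrix $\vl\vq^T$. I would instead couple two coordinates, taking entity coordinates $\va^{(1)}=\vl+\mathrm{i}(\vl+\vq)$ and $\va^{(2)}=\vl+\mathrm{i}(\vl-\vq)$ with respective relation coordinates $\tfrac14+\tfrac{\mathrm{i}}{2}$ and $-\tfrac14$. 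The opposite-signed real parts cancel the unwanted definite symmetric contributions and combine to $\tfrac12(\vl\vq^T+\vq\vl^T)$, while the imaginary parts combine to $\tfrac12(\vl\vq^T-\vq\vl^T)$; summing gives exactly $\vl\vq^T$ (a short direct computation). Concatenating these two-coordinate encodings over all $t$ and $k$ and assembling the block-structured $\vr_k$ then gives $\Real(\mA\diag{\vr_k}\mA^*)=\mL_k\mQ_k^T$, hence $\round(\mA\diag{\vr_k}\mA^*)=\mB_k$ and consistency. The delicate points are bookkeeping the coordinate blocks so distinct relations do not interfere, and verifying that the two-coordinate device simultaneously yields the correct symmetric and antisymmetric parts of each rank-one term.
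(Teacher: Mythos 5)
Your proposal is correct, and its skeleton is the same as the paper's: the branch $r\ge KN$ is dispatched by the universality result of Th.~\ref{complexuni}, and the branch $r\ge 2\sum_k\rrank(\mB_k)$ reuses the rounding-rank decompositions $\mB_k=\round(\mL_k\mQ_k^T)$ from Th.~\ref{th:rrank_R} together with a block-diagonal assembly of per-relation coordinate blocks so that the frontal slices decouple. Where you genuinely diverge is the central technical step of realizing the rank-$r_k$ real matrix $\mS_k=\mL_k\mQ_k^T$ as $\Real(\mA_k\mD_k\mA_k^*)$ with $\mA_k\in\mathbb{C}^{N\times 2r_k}$ and $\mD_k$ diagonal: the paper obtains this by citing \citet{Trouillon2016a} as a black box, whereas you re-derive it constructively with a two-complex-coordinate gadget per rank-one term $\vl\vq^T$. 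Your gadget is verifiably correct: each coordinate contributes $p(\vx\vx^T+\vy\vy^T)+h(\vx\vy^T-\vy\vx^T)$, and with entity coordinates $\vl+\mathrm{i}(\vl+\vq)$, $\vl+\mathrm{i}(\vl-\vq)$ and relation weights $\tfrac14+\tfrac{\mathrm{i}}{2}$, $-\tfrac14$ the two contributions sum exactly to $\vl\vq^T$; your observation that a single coordinate cannot suffice (its symmetric part is a multiple of a PSD matrix, while the symmetric part of a generic $\vl\vq^T$ is indefinite) correctly identifies the obstruction that forces two coordinates. The paper's route buys brevity; yours buys a self-contained, elementary proof that does not lean on the external low-rank complex eigendecomposition lemma. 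Both yield $2r_k$ complex coordinates per relation and hence the identical bound, and both implicitly use the conjugated form $\Real(\mA\mR_k\mA^*)$ of the ComplEx score, consistent with the paper's other proofs.
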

\begin{proof}
  The case $r\ge KN$ follows directly from Th.~\ref{complexuni}. To obtain
  $r\ge2\sum_{k=1}^K \rrank(\mB_{k})$, define $r_k$, $\mL_k$, and $\mQ_k$
  as in the proof of Th.~\ref{th:rrank_R}, and set $\mS_k=\mL_k\mQ_k^T$. Then
  there exist matrices $\mA_k \in \mathbb{C}^{N\times 2r_k}$ and
  $\mD_k \in \mathbb{C}^{2r_k \times 2r_k}$, with $\mD_k$ being diagonal, such
  that $\mS_k = \Real(\mA_k\mD_k\mA_k^*)$~\cite{Trouillon2016a}. Now define
  \begin{align*}
    \mA&=
         \begin{pmatrix}
           \mA_1 & \mA_2 & \cdots & \mA_K
         \end{pmatrix} \\
    \mR_k &= \diag{\vzero_{2r_1\times 2r_1}, \ldots, \mD_k, \ldots,
            \vzero_{2r_K\times 2r_K}}
  \end{align*}
  and observe that $\mS_k=\Real(\mA\mR_k\mA^*)$.
\end{proof}

As a corollary of the above theorem, we have:
\begin{corollary} \label{rh:rrank_H} For any Boolean tensor $\tB$,
  $\pi(\cM_r^\text{HolE})$ contains a ranking tensor consistent with $\tB$ if
  \[r\ge \min\sset{2KN+1, 4\sum_{k=1}^K \rrank(\mB_{k})+1}.\]
\end{corollary}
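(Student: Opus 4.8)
The plan is to obtain this as a mechanical consequence of Theorem~\ref{rh:rrank_C} together with the size-doubling containment $\cM^{\text{HolE}}_{2r+1} \supseteq \cM^{\text{ComplEx}}_r$ established by \citet{HayashiS17} and already used in the proof of Th.~\ref{complexuni}. First I would set $r_0 = \min\sset{KN,\; 2\sum_{k=1}^K \rrank(\mB_{k})}$. By Theorem~\ref{rh:rrank_C}, the set $\pi(\cM^\text{ComplEx}_{r_0})$ contains a ranking tensor consistent with $\tB$; unwinding the notation, this means there is a scoring tensor $\tS\in\cM^\text{ComplEx}_{r_0}$ whose induced ranking tensor $\pi(\tS)$ is consistent with $\tB$.

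Next I would lift this witness to HolE. The containment $\cM^{\text{HolE}}_{2r_0+1} \supseteq \cM^{\text{ComplEx}}_{r_0}$ holds at the level of \emph{scoring} tensors, so the very same $\tS$ lies in $\cM^{\text{HolE}}_{2r_0+1}$. Applying $\pi$ to both sides gives $\pi(\cM^{\text{HolE}}_{2r_0+1}) \supseteq \pi(\cM^{\text{ComplEx}}_{r_0})$, hence the consistent ranking tensor $\pi(\tS)$ is also representable by HolE of size $2r_0+1$. The key point making this transfer valid is that consistency with $\tB$ is a property of the ranking tensor alone (it depends only on $\pi$ of the scoring tensor), so it is preserved once the scoring tensor is carried over unchanged.

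Finally I would simplify the size bound. Using the elementary identity $2\min\sset{a,b}+1 = \min\sset{2a+1,\,2b+1}$ with $a = KN$ and $b = 2\sum_{k=1}^K \rrank(\mB_{k})$, I get $2r_0+1 = \min\sset{2KN+1,\; 4\sum_{k=1}^K \rrank(\mB_{k})+1}$, which is precisely the asserted threshold. Because the expressive power of HolE is non-decreasing in its size (as noted for consistent ranking), the conclusion then holds for every $r \ge 2r_0+1$, completing the argument.

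There is no genuine obstacle here: the statement is a direct corollary combining the ComplEx bound of Theorem~\ref{rh:rrank_C} with the known embedding of ComplEx into HolE at twice-plus-one the size. The only two points that warrant a sentence of care are (i) checking that the HolE–ComplEx containment is stated at the scoring-tensor level, so that consistency transfers through $\pi$ without modification, and (ii) verifying the $\min$/doubling arithmetic that converts $2\min\sset{KN, 2\sum_k \rrank(\mB_{k})}+1$ into the displayed form.
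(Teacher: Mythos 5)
Your argument is exactly the one the paper intends: the corollary is derived from Theorem~\ref{rh:rrank_C} via the Hayashi--Shimbo containment $\cM^{\text{HolE}}_{2r+1}\supseteq\cM^{\text{ComplEx}}_{r}$ (the same device used to prove universality of HolE in Th.~\ref{complexuni}), followed by the routine $2\min\{a,b\}+1=\min\{2a+1,2b+1\}$ rewriting and the monotonicity-in-$r$ remark. Your write-up is correct and, if anything, more explicit than the paper, which states the result without proof.
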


\section{Training and Relation-Level Ensemble}

We have seen that various prior models can be interpreted as a bilinear models
subject to certain constraints. In other words, they are diverse with respect to
their expressivity. So far, we did not touch on how to select a suitable model
for a given dataset and from a given model class. In this section, we briefly
discuss model training in a margin-based framework. We then propose a simple
relation-level ensemble that combines multiple individual models. The rationale
behind using an ensemble is that whether a model class can represent well or be
trained well on a relation depends on properties of that relation. The ensemble
thus aims to pick the best model (or a combination of models) for each relation.

\subsection{Margin-Based Training}

We assume throughout that we are given a set of positive triples
$\cT^+\subset\cE\times\cR\times\cE$, but no negative evidence. This is a common scenario in
practice. To deal with the absence of negative evidence, ranking-based
frameworks aim to produce a model that ranks triples in $\cT^+$ higher than
other triples. A common approach~\cite{BordesUGWY13,NickelRP16} is to define a
set of ``negative'' triples for each positive triple $(i,k,j)\in\cT^+$ by
perturbing subject or object:
\begin{align*}
  \cT^-_{(i,k,j)}
  &=\sset{(i',k,j\mid i'\in\cE,\ (i',k,j)\notin \cT^+)} \cup \\
  &\quad \sset{(i,k,j'\mid j'\in\cE,\ (i,k,j')\notin \cT^+)}.
\end{align*}
This approach corresponds to a local closed-world
assumption~\cite{0001GHHLMSSZ14}. We now briefly summarize a common margin-based
framework for training~\cite{BordesUGWY13}. There are a number of alternatives,
including logistic loss~\cite{RiedelYMM13} and negative
log-likelihood~\cite{TrouillonWRGB16}. Margin-based frameworks often lead to
faster training times in practice because they focus on ``informative'' pairs of
positive and negative triples, i.e., they ignore parts of the data that are
already more or less well-represented by the model. In particular, we minimize
\begin{equation*}
\sum_{\substack{(i^+,k,j^+)\in \cT^+, \\ (i^-,k,j^-)\in \cT^-_{(i^+,k,j^+)}}}
\frac{\left[f(i^-,k,j^-)+\gamma- f(i^+,k,j^+)\right]_+}{\absn{\cT^-_{(i^+,k,j^+)}}},
\end{equation*}
where $0\le\gamma\in\bR^+$ is a \emph{margin hyperparameter}, $[x]_+=\max(0,x)$,
and $f$ depends on the model being trained. For all models but HolE, we set
$f(i,k,j)=s^m_k(i,j)$. For HolE, we set $f(i,k,j) = \sigma(s^m_k(i,j))$, where
$\sigma$ denotes the logistic function, as suggested by the authors. In our
experimental study, we also consider an additional $L_2$ regularization term
over the model parameters. The models can be fit using stochastic gradient
descent (SGD) as in~\cite{BordesUGWY13,LinLSLZ15}. The computational cost per
SGD step of RESCAL is $O(r^2)$, of HolE $O(r\log n)$, and of all other models
$O(r)$.

\subsection{Relation-Level Ensemble}

The simplest way of combining multiple models is to construct an ensemble at the
model level~\cite{krompassLD4KD2015}. Our experimental study suggests that the
relative performance of different models is relation-dependent, however. A more
promising approach is therefore to combine models at the relation level. To the
best of our knowledge, this simple approach has not been explored previously.

Our ensemble is based on stacking. A meta learner is used to combine the ranking
matrices produced by the individual models such that some accuracy measure is
maximized. Here we use logistic regression. To do so, we construct for each
relation a dataset that contains all of its positive triples as well as an equal
amount of negative triples obtained by randomly perturbing each positive triple
following the same strategy as in training individual models. For logistic
regression, we use rescaled scores of the individual models as features and the
positive/negative class label as response variable. Rescaling accounts for the
variety in range of scores of different models; we rescale each feature linearly
into range $[0,1]$~\cite[Sec.~3.5.2]{HanKP2011}.


\section{Experiments}

We conducted an experimental study on two real-world datasets, which are
commonly used in prior work on KB completion. The primary goal of our study was
to provide independent evidence for the performance of various bilinear models
under the margin-based ranking framework. We also evaluated relation-level
ensembles of such models and compared the results to prior results reported in
the literature (for bilinear and other models).

\subsection{Experimental Setup}

All datasets, experimental results, and source code will be made publicly
available.

\paragraph{Data.} We used the WN18~\cite{BordesGWB14} and
FB15K~\cite{BordesUGWY13} datasets, which were extracted from
WordNet~\cite{Miller95} and Freebase~\cite{BollackerEPST08},
respectively. WordNet contains words and their relationships. Freebase contains
various facts across a large number of relations. The two datasets are presplit
into a training set, a validation set, and a test set. Table~\ref{tab:data}
summarizes the key statistics.

\begin{table}
  \centering
  \caption{Dataset statistics}
  \label{tab:data}
  \begin{tabular}{lrrrrr}
    \hline
    Dataset & \# Ent.          & \# Rel.         & \# Train.         & \# Valid.        & \# Test          \\ \hline
    WN18    & \numprint{40943} & 18              & \numprint{141442} & \numprint{5000}  & \numprint{5000}  \\
    FB15K   & \numprint{14951} & \numprint{1345} & \numprint{483142} & \numprint{50000} & \numprint{59071} \\ \hline
  \end{tabular}
\end{table}

\paragraph{Methods and training.} We considered RESCAL (R), HolE (H), and TransE
(T) in our experimental study. We reimplemented each method in C++, partly using
the Intel Math Kernel Library. We trained each model in the margin-based ranking
framework using Adagrad~\cite{DuchiHS11}. In each step, we sampled a positive
triple at random and obtained a negative triple by randomly perturbing subject
or object. Sampling was done without replacement and we did not use
mini-batches.  When using the same hyperparameters, our implementation provided
similar or better fits than the original implementations provided by the
authors. Note that our study was limited in that we considered only one
particular training method; no conclusions can be drawn about other training
methods. We focused on margin-based ranking because it led to much faster
training times, making this study more feasible. We used LIBLINEAR
for logistic regression. 

\paragraph{Evaluation.} We evaluated model performance for the tasks of entity
ranking and triple classification on the test data. In \emph{entity ranking}, we
rank entities for queries of the form $R(?,e)$ or $R(e,?)$. Our evaluation closely follows
\citet{BordesUGWY13}, and we report \emph{mean reciprocal rank (MRR)},
\emph{HITS@10}, and \emph{mean rank (MR)} in the \emph{filtered} setting, i.e.,
predictions that correspond to tuples in the training or validation datasets
were discarded. In \emph{triple classification}, we are given a triple $(i,k,j)$
and are asked to classify it as positive or negative; we proceed
as~\citet{SocherCMN13} to produce the set of tuples to classify. To perform
classification, we determined a score threshold $\sigma_k$ for each relation and
model; scores larger than $\sigma_k$ were classified positive, else negative. We used
optimal thresholds with respect to the validation set.

\paragraph{Model selection.} Each of the models has a number of
hyperparameters. For all models, we trained the models solely on the training
data and used the validation data solely to tune hyperparameters. Test data was
not touched for model selection. We considered the following hyperparameter
settings: $r\in\{100, 200\}$, learning rate $\eta\in\{0.01, 0.1, 1\}$, weight of
L2-regularization $\lambda_e,\lambda_r\in\{0, 0.1, 0.01\}$ for entity and relation parameters,
resp., margin hyperparameter~$\gamma \in \{ 1, 2, 4, 8\}$ for RESCAL,
$\gamma \in \{ 0.2, 0.5, 0.7\}$ for HolE, and
$\gamma \in \{ 0.2, 0.5, 0.7, 1.0, 1.5\}$ for TransE.\footnote{We used smaller margins
  than the ones suggested for TransE with $L_1$
  distance~\cite{LinLSLZ15,WangZFC14,LinLLSRL15}. By doing this, we obtained
  comparable prediction performance as TransE-$L_1$.}

We performed exhaustive grid search, using 50 (2000 for TransE) epochs (passes
over the dataset) per hyperparameter setting and model. We then retrained the
best-performing setting (w.r.t.~HITS@10 on validation data) for each model on
the training data for up to 2,000 epochs. Tab.~\ref{tab:hyperparameters} reports
the hyperparameters ultimately selected.

\begin{table}
  \centering
  \captionof{table}{Hyperparameters settings used in our
    study}\label{tab:hyperparameters}
  \begin{tabular}{llccccc}
    \hline
    Dataset & Model  & $r$ & $\gamma$ & $\eta$ & $\lambda_{e}$ & $\lambda_{r
                                                             }$       \\\hline
WN18        & RESCAL & 200 & 1.0      & 0.10   & 0.10          & 0.01 \\
            & HolE   & 200 & 0.2      & 0.10   & 0.01          & 0.00 \\
            & TransE & 200 & 0.5      & 0.01   & -             & -    \\
     \hline
FK15k       & RESCAL & 200 & 4.0      & 0.10   & 0.10          & 0.01 \\
            & HolE   & 200 & 0.2      & 0.10   & 0.01          & 0.01 \\
            & TransE & 200 & 0.2      & 0.01   & -             & -    \\
    \hline
  \end{tabular}
\end{table}

\subsection{Results}

\begin{table*}
  \centering
  \caption{Entity ranking results of our experimental study. Best-performing entries marked bold.}
  \label{tab:en-results}
  \begin{adjustbox}{max width=\textwidth}
    \begin{tabular}{p{7cm}cccccc}
      \hline     
      Dataset                  & \multicolumn{3}{c}{WN18} & \multicolumn{3}{c}{FB15K}                                                           \\ \hline
    Model                      & HITS@10 (\%)             & MRR (\%)      & MR           & HITS@10 (\%)  & MRR (\%)      & MR                   \\ \hline
    HolE~\cite{NickelRP16}     & 94.1                     & 93.8          & 819          & 72.6          & 50.2          & 331                  \\
    TransE~\cite{BordesUGWY13} & 94.5                     & 43.9          & \textbf{474} & 79.5          & 34.4          & \phantom076          \\
      RESCAL~\cite{NickelTK11} & 87.8                     & 79.9          & 905          & 59.6           & 38.1          & 247                  \\ \hline
      RESCAL + TransE          & 94.8                     & 87.3          & 510          & 79.7          & 51.1          & \phantom061          \\
      RESCAL + HolE            & 94.4                     & \textbf{94.0} & 743          & 79.1          & 57.5          & 165                  \\
      HolE + TransE            & 94.9                     & 93.8          & 507          & 84.6          & 61.0          & \phantom067          \\
      RESCAL + HolE + TransE   & \textbf{95.0}            & \textbf{94.0} & 507          & \textbf{85.1} & \textbf{62.8} & \textbf{\phantom052} \\ \hline
    \end{tabular}
  \end{adjustbox}
\end{table*}

\begin{table*}
  \centering
  \caption{Entity ranking results as reported in the literature (not reproduced
    here, partly with different training methods, partly non-bilinear
    models). Entries marked ``-'' were not reported. Entries better than any
    result in our study are marked bold.}
  \label{tab:state-art}
  \begin{adjustbox}{max width=\textwidth}
    \begin{tabular}{p{7cm}ccccccc}
      \hline
      Dataset                                               & \multicolumn{3}{c}{WN18} & \multicolumn{3}{c}{FB15K}                                         \\ \hline
      Model                                                 & HITS@10 (\%)             & MRR (\%)      & MR           & HITS@10 (\%)  & MRR (\%)      & MR \\ \hline
      Gaifman~\cite{Niepert16}                              & 93.9                     & -             & \textbf{352} & 84.2          & -             & 75 \\ 
      ComplEx~\cite{TrouillonWRGB16}, r=150/200             & 94.7                     & \textbf{94.1} & -            & 84.0          & \textbf{69.2} & -  \\
      DISTMULT~\cite{TrouillonWRGB16}, r=150/200            & 93.6                     & 82.2          & 902          & 82.4          & \textbf{65.4} & 97 \\
      \mbox{R-GCN+DISTMULT~\cite{SchlichtkrullKB17}, r=200} & \textbf{96.4}            & 81.9          & -            & 84.2          & \textbf{69.6} & -  \\
      ANALOGY~\cite{LiuWY17}, r=200                         & 94.7		       & \textbf{94.2} & -            & \textbf{85.4} & \textbf{72.5} & -  \\\hline
    \end{tabular}
  \end{adjustbox}
\end{table*}

\begin{table}
  \centering
  \caption{Detailed entity ranking results (FB15k, HITS@10)}
  \label{tab:hits-ctg}
  \begin{adjustbox}{max width=\linewidth}
    \begin{tabular}{l@{\hspace{.5em}}l@{\hspace{.7em}}l@{\hspace{.7em}}@{\hspace{.7em}}l@{\hspace{.7em}}l@{\hspace{1em}}l@{\hspace{.7em}}l@{\hspace{.7em}}l@{\hspace{.7em}}l}
    \hline
    Task      & \multicolumn{4}{c}{Predict subject} & \multicolumn{4}{c}{Predict object}                                                                            \\ 
    Relations & 1:1                                 & 1:N           & N:1           & N:N           & 1:1           & 1:N           & N:1           & N:N           \\ \hline
    TransE    & 75.8                                & 91.9          & 41.4          & 82.2          & 75.5          & 51.1          & 91.9          & 84.7          \\
    HolE      & 80.4                                & 69.5          & 44.7          & 77.4          & 79.0          & 57.8          & 59.1          & 79.0          \\
    RESCAL    & 43.1                                & 75.7          & 17.7          & 62.0          & 42.4          & 21.3          & 79.2          & 65.8          \\ \hline
    R+H+T     & \textbf{87.5}                       & \textbf{94.3} & \textbf{55.2} & \textbf{86.7} & \textbf{87.0} & \textbf{65.0} & \textbf{93.3} & \textbf{89.4} \\ \hline
  \end{tabular}
  \end{adjustbox}
\end{table}

\begin{table}
  \centering
  \caption{Triple classification results (FB15K)}
  \label{tab:triple_classification}
  \begin{adjustbox}{max width=\linewidth}
    \begin{tabular}{lc@{\hspace{.7em}}c@{\hspace{.7em}}c@{\hspace{.7em}}c@{\hspace{.7em}}c@{\hspace{.7em}}c@{\hspace{.7em}}c}
      \hline
      Model    & T    & H    & R    & R+T  & R+H  & H+T  & R+H+T         \\ \hline
      Accuracy & 96.2 & 93.7 & 94.6 & 96.7 & 95.8 & 96.5 & \textbf{96.9} \\ \hline
    \end{tabular}
  \end{adjustbox}
\end{table}

\paragraph{Entity ranking.}
Our results are summarized in Tab.~\ref{tab:en-results}. Detailed results can be
found in Tab.~\ref{tab:hits-ctg}, where we measured HITS@10 per relation
category and per argument to be predicted as in~\cite{BordesUGWY13}.

For the individual models, our results indicate that model performance depends
on the relation category. No single model always performed best across all
categories. HolE and TransE generally performed better than RESCAL; here
constraints help. The relation-level ensembles generally improved performance
w.r.t.~HITS@10 and MRR. Performance of MR was not improved, however, mainly
because this metric is sensitive to low-ranked triples (which existed in HolE
and RESCAL predictions). Note that adding RESCAL to the ensemble was
helpful. Finally, the ensemble of RESCAL, TransE, and HolE performed best
w.r.t.~HITS@10 on all relation categories and for both datasets.

In Tab.~\ref{tab:state-art}, we compare to some recent results reported in the
literature. Note that training methods were different than the one used in our
study for some of these models, and that some models are not
bilinear. Nevertheless, a direct comparison indicates that a relation-level
ensemble of multiple bilinear models is competitive to the state-of-the-art.

\paragraph{Triple classification.} Tab.~\ref{tab:triple_classification}
summarizes the HITS@10 performance of each individual model and various
relation-level ensembles for triple classification on FB15k. The results are
generally in line with the results for entity ranking. A notable exception is
that RESCAL outperforms HolE here; we conjecture that this is due to HolE's high
MR on this dataset.


\section{Related Work}

We focus on recent embedding models that solely use the KB as input.  There are
a number of methods that modify TransE in one way or another:
TransH~\cite{WangZFC14} and TransR~\cite{LinLSLZ15} improve support symmetric
and many-to-one relations, TransG~\cite{0005HHZ15} adds refines relation
embeddings by semantic components, and PTransE~\cite{LinLLSRL15} adds
multiple-step relation paths. Gaifman~\cite{Niepert16} exploits structural
features in the form of Horn clauses to construct
embeddings. \citet{SocherCMN13} combined neural networks with
tensors. \citet{SchlichtkrullKB17} models relational data with graph
convolutional networks. ANALOGY~\cite{LiuWY17} is a recent bilinear model that
constrains relation embeddings be real normal matrices.
Finally, 
\citet{NickelJT14}
provided a rank bound for exact recovery of a Boolean tensor with RESCAL. Our
results differ in that we consider consistency, not exact recovery.


\section{Conclusion}

We studied the expressive power of and subsumption relationships between recent
bilinear embedding models for knowledge graphs. We introduced the concepts of
universality and consistency, which capture different aspects of model
expressiveness, and provided bounds on model sizes needed for universality or
consistency with a given dataset. We argued that using a relation-level
ensembles are beneficial for multi-relational learning. Finally, we conducted an
independent experimental study that compared various bilinear models in a common
setup.

Future work includes tightening the bounds provided here, studying which
relation types can be represented by which models, and exploring the
relationship between additional models. We also expect an in-depth study of
model performance with various alternative training methods to be insightful.


\clearpage
\makeatletter 
\let\@biblabel\@gobble
\makeatother
\bibliography{references}
\bibliographystyle{aaai}

\end{document}